\pdfoutput=1

\documentclass[11pt]{article}

\usepackage[colorlinks=true,allcolors=blue]{hyperref}
\usepackage{url}
\usepackage{graphicx}
\usepackage{booktabs}
\usepackage{multirow}
\usepackage{amsmath,amssymb,amsthm}
\usepackage{wrapfig}
\usepackage{enumitem}
\usepackage{natbib}
\usepackage{tikz}
\usepackage{bm}
\usetikzlibrary{shapes,arrows,arrows.meta,positioning}

\usepackage{algorithm}
\usepackage{algorithmic}

\usepackage{import}
\usepackage{standalone}
\usepackage{pgfplots}
\usepgfplotslibrary{fillbetween}
\usepgfplotslibrary{groupplots}
\pgfplotsset{compat=1.18}

\theoremstyle{plain}
\newtheorem{theorem}{Theorem}[section]
\newtheorem{proposition}[theorem]{Proposition}

\theoremstyle{definition}
\newtheorem{definition}[theorem]{Definition}

\theoremstyle{remark}

\newcommand{\diag}{\mathrm{diag}}
\newcommand{\HFER}{\mathrm{HFER}}
\newcommand{\SE}{\mathrm{SE}}
\newcommand{\R}{\mathbb{R}}

\newcommand{\Tr}{\mathrm{Tr}}

\title{Spectral Guardrails for Agents in the Wild: \\ Detecting Tool Use Hallucinations via Attention Topology}

\author{Valentin Noël \\
Devoteam, Paris, France \\
\texttt{valentin.noel@devoteam.com}
}
\date{}

\begin{document}

\maketitle

\begin{abstract}
Deploying autonomous agents in the wild requires reliable safeguards against tool use failures. We propose a training free guardrail based on spectral analysis of attention topology that complements supervised approaches. On Llama 3.1 8B, our method achieves 97.7\% recall with multi-feature detection and 86.1\% recall with 81.0\% precision for balanced deployment, without requiring any labeled training data. Most remarkably, we discover that single layer spectral features act as near-perfect hallucination detectors: Llama L26 Smoothness achieves 98.2\% recall (213/217 hallucinations caught) with a single threshold, and Mistral L3 Entropy achieves 94.7\% recall. This suggests hallucination is not merely a wrong token but a thermodynamic state change: the model's attention becomes noise when it errs. Through controlled cross-model evaluation on matched domains ($N=1000$, $T=0.3$, same General domain, hallucination rates 20--22\%), we reveal the ``Loud Liar'' phenomenon: Llama 3.1 8B's failures are spectrally catastrophic and dramatically easier to detect, while Mistral 7B achieves the best discrimination (AUC 0.900). These findings establish spectral analysis as a principled, efficient framework for agent safety.
\end{abstract}

\section{Introduction}

As large language model agents move from research demonstrations to real world deployment, reliability becomes paramount. An agent tasked with financial transactions, appointment scheduling, or system administration must invoke tools correctly \citep{patil2024gorilla, qin2023toolllm}; hallucinated function names, fabricated parameters, or schema violations can cause direct harm to users. For safety critical applications, the cost of a missed hallucination far exceeds the cost of a false alarm, making recall the primary optimization target.

Current approaches to hallucination detection rely on supervised probing \citep{azaria2023internal, chen2024inside, zhang2025prompt, zhang2025detecting, bhatnagar2026halt, hou2025probabilistic}. Recent work by \citet{healy2026internal} trains multi layer perceptrons on hidden states, achieving strong performance on tool calling tasks. While effective, supervised approaches require labeled samples per model, which can be limiting when agents continuously adapt to new tools or when deployment contexts shift.

We propose a complementary approach: spectral analysis of attention topology. By treating attention matrices as dynamic graphs \citep{el2025towards, noel2025gsp} and computing properties of their Laplacian spectrum \citep{chung1997spectral, shuman2013emerging}, we detect hallucinations without any learned parameters. Our method achieves 97.7\% recall on Llama 3.1 8B with multi-feature detection, and 86.1\% recall with 81.0\% precision for balanced deployment, demonstrating that training free spectral features can serve as effective guardrails \citep{binkowski2025hallucination, li2025llm}. The two approaches offer different tradeoffs: supervised probes may achieve higher precision when abundant labeled data is available, while spectral methods provide immediate deployability without data collection.

Our most striking finding is that single layer spectral features act as near perfect hallucination detectors. Llama L26 Smoothness achieves 98.2\% recall, catching 213 of 217 hallucinations with a single threshold. Mistral L3 Entropy \citep{jiang2023mistral} achieves 94.7\% recall. This suggests a deeper insight: hallucination is not merely selecting a wrong token; it is a thermodynamic state change in which the model's attention becomes noise. When a model hallucinates, its spectral energy disperses across modes rather than concentrating in coherent patterns \citep{shuman2013emerging}.

\begin{figure*}[ht]
    \centering
    \begin{tikzpicture}[
    node distance=1.2cm,
    block/.style={
        rectangle, draw, fill=white,
        inner sep=5pt, text width=1.9cm, align=center,
        rounded corners=3pt, font=\small\sffamily,
        line width=0.6pt
    },
    smallblock/.style={
        rectangle, draw, fill=white,
        inner sep=4pt, text width=1.8cm, align=center,
        rounded corners=2pt, font=\scriptsize\sffamily,
        line width=0.5pt
    },
    arrow/.style={-{Stealth[scale=1.1]}, thick, draw=gray!70}]
    
    \node (input) [block, fill=gray!10] {
        \textbf{Tool Call}\\[2pt]
        {\tiny\ttfamily transfer\_funds(}\\
        {\tiny\ttfamily \ \ acc="1234",}\\
        {\tiny\ttfamily \ \ amt=500)}
    };
    
    \node (transformer) [right=0.35cm of input, smallblock, fill=orange!10] {
        \textbf{Transformer}\\[2pt]
        {\tiny Extract $\mathbf{A}^{(\ell,h)}$, $\mathbf{X}^{(\ell)}$}
    };
    
    \node (laplacian) [right=0.35cm of transformer, block] {
        \textbf{Graph Laplacian}\\[2pt]
        {\tiny $\mathbf{L} = \mathbf{D} - \frac{1}{2}(\mathbf{A} + \mathbf{A}^\top)$}
    };
    
    \node (valid) [right=0.6cm of laplacian, smallblock, fill=blue!15, draw=blue!50] {
        \textbf{Valid Call}\\[3pt]
        \begin{tikzpicture}[scale=0.4]
            \foreach \i in {1,...,5} {
                \node[circle, fill=blue!60, inner sep=1.2pt] (n\i) at ({72*\i}:0.6) {};
            }
            \draw[blue!60, thick] (n1)--(n2)--(n3)--(n4)--(n5)--(n1);
            \draw[blue!40] (n1)--(n3) (n2)--(n4) (n3)--(n5);
        \end{tikzpicture}\\[-1pt]
        {\tiny Low Entropy}
    };
    
    \node (invalid) [below=0.5cm of valid, smallblock, fill=red!15, draw=red!50] {
        \textbf{Hallucination}\\[3pt]
        \begin{tikzpicture}[scale=0.4]
            \foreach \i in {1,...,5} {
                \node[circle, fill=red!60, inner sep=1.2pt] (n\i) at ({72*\i}:0.6) {};
            }
            \draw[red!40, thin] (n1)--(n2) (n4)--(n5);
        \end{tikzpicture}\\[-1pt]
        {\tiny High Entropy}
    };
    
    \node (output) [right=0.6cm of valid, yshift=-0.8cm, block, fill=green!10, draw=green!50!black] {
        \textbf{Guardrail}\\[2pt]
        {\scriptsize 98.2\% Recall}\\
        {\scriptsize (single feature)}
    };
    
    \draw[arrow] (input) -- (transformer);
    \draw[arrow] (transformer) -- (laplacian);
    \draw[arrow] (laplacian.east) -- ++(0.4,0) |- (valid.west);
    \draw[arrow] (laplacian.east) -- ++(0.4,0) |- (invalid.west);
    \draw[arrow] (valid.east) -- ++(0.3,0) |- (output.west);
    \draw[arrow] (invalid.east) -- ++(0.3,0) |- (output.west);
    
    \node[above=0.05cm of valid, font=\tiny\itshape, text=blue!70] {Coherent topology};
    \node[below=0.05cm of invalid, font=\tiny\itshape, text=red!70] {Entropic collapse};
    
    \end{tikzpicture}
    \caption{\textbf{Method Overview.} Spectral analysis of attention graphs enables training free hallucination detection. Hallucinations manifest as spectral collapse, a thermodynamic signature of incoherent reasoning. A single Smoothness feature achieves up to 98.2\% recall on Llama.}
    \label{fig:pipeline_teaser}
\end{figure*}
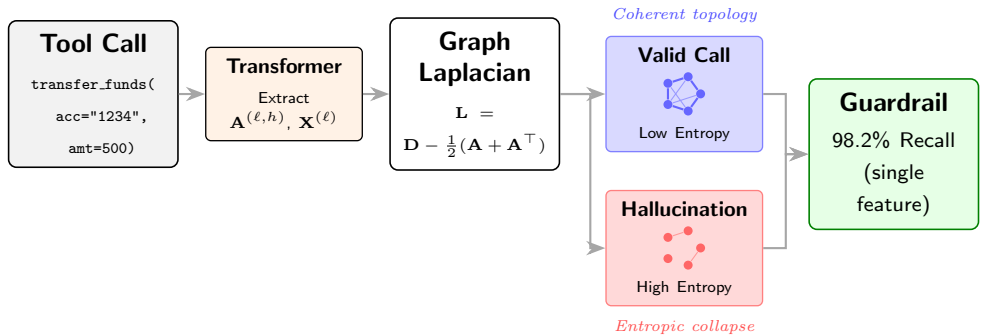

A critical methodological contribution of this work is our controlled cross-model analysis. By evaluating all three architectures on the same General domain with matched hallucination rates (Qwen 20.0\%, Mistral 20.7\%, Llama 21.7\%), we isolate architecture-specific failure signatures independent of domain or base rate confounds. This reveals what we call the ``Loud Liar'' phenomenon: Llama 3.1 8B's hallucinations are spectrally catastrophic, with a single feature catching 98.2\% of failures. In contrast, Mistral achieves the best discriminative ability (AUC 0.900), maintaining a sharper boundary between valid and invalid calls. These are fundamentally different failure geometries with distinct implications for deployment.

We additionally evaluate domain-specific effects, finding that Llama hallucinates 3.5$\times$ more frequently on Finance (61.3\%) than on General domain (21.7\%), yet detection remains highly effective in both cases. This Model Size Paradox, where larger models fail more but more detectably, suggests that spectral guardrails are most effective precisely where they are most needed.

Our contributions are:
\begin{enumerate}[leftmargin=*,topsep=0pt,itemsep=2pt]
    \item \textbf{A training-free spectral guardrail} that achieves 98.2\% recall on Llama 3.1 8B using a single feature (L26 Smoothness), offering a complementary safety layer to supervised methods without requiring labeled training data.
    \item \textbf{The ``Loud Liar'' discovery:} A controlled cross-model analysis revealing that larger models fail catastrophically and detectably, whereas Mistral achieves superior discrimination (AUC 0.900) with a cleaner geometry of truth.
    \item \textbf{Architecture-specific deployment strategies} derived from a comprehensive evaluation across Qwen, Mistral, and Llama, providing concrete configurations for both safety-critical (recall-optimized) and balanced applications.
\end{enumerate}

\section{Related Work}

\paragraph{Tool Use and Agent Reliability.}
Agent reliability has emerged as a central concern as LLM-based systems are deployed in consequential domains. Tool use benchmarks \citep{patil2024gorilla,qin2023toolllm} evaluate capability but not failure modes. \citet{healy2026internal} demonstrate that supervised probes on hidden states can detect tool-use hallucinations; we address whether comparable detection is possible without labeled training data.

\paragraph{Hallucination Detection.}
White-box methods analyzing internal states have shown promise: INSIDE \citep{chen2024inside}, prompt-guided probes \citep{zhang2025prompt}, deep representation analysis \citep{zhang2025detecting}, HALT \citep{bhatnagar2026halt}, and probabilistic belief propagation \citep{hou2025probabilistic}. Process reward models \citep{lightman2023lets} verify reasoning but require expensive annotation. Our spectral method complements these by operating on attention topology rather than hidden state magnitudes.

\paragraph{Spectral Analysis for LLMs.}
Spectral analysis has theoretical grounding in graph signal processing \citep{shuman2013emerging}. A body of recent work establishes spectral mechanistic interpretability as a framework for understanding transformer computation: \citet{noel2025gsp} introduced attention-induced graphs and spectral diagnostics for hallucination detection and \citet{el2025towards} introduced Attention Graphs for GNN interpretability from a network science perspective. Concurrent work confirms spectral geometry captures LLM computation: \citet{binkowski2025hallucination} use Laplacian eigenvalues; \citet{li2025llm} apply FFT to hidden layer dynamics. We extend this framework to agentic tool-use.

\section{Methods}

\subsection{Threat Model}

We consider an agent deployed in the wild that must call external tools (APIs, databases, system commands). The agent receives user requests, generates tool calls, and executes them. Our guardrail sits between generation and execution, flagging suspicious calls for human review or automatic rejection.

We focus on hallucinations: syntactically valid but semantically incorrect tool calls. These include fabricated function names, incorrect parameter values, and schema violations that would pass basic format checks. Such errors are particularly dangerous because they appear plausible. For safety critical applications, we prioritize recall: it is better to flag a valid call for review than to execute a hallucinated one.

\subsection{Attention as Dynamic Graphs}

Consider a transformer with $L$ layers and $H$ attention heads per layer processing a sequence of $N$ tokens. At layer $\ell \in \{1, \ldots, L\}$ and head $h \in \{1, \ldots, H\}$, let $\mathbf{A}^{(\ell,h)} \in \R^{N \times N}$ denote the post softmax attention matrix, where $A^{(\ell,h)}_{ij}$ represents the attention weight from token $i$ to token $j$. By construction, each row sums to unity: $\sum_{j=1}^N A^{(\ell,h)}_{ij} = 1$.

We interpret each attention matrix as defining a directed weighted graph $\mathcal{G}^{(\ell,h)} = (\mathcal{V}, \mathcal{E}^{(\ell,h)}, \mathbf{A}^{(\ell,h)})$ where vertices $\mathcal{V} = \{1, \ldots, N\}$ correspond to tokens and edge weights are given by attention scores. To enable spectral analysis, we symmetrize to obtain an undirected graph:
\begin{equation}
\mathbf{W}^{(\ell,h)} = \frac{1}{2}\left(\mathbf{A}^{(\ell,h)} + (\mathbf{A}^{(\ell,h)})^\top\right)
\label{eq:symmetrize}
\end{equation}

To obtain a single graph per layer, we aggregate across heads using attention mass weighting:
\begin{equation}
\bar{\mathbf{W}}^{(\ell)} = \sum_{h=1}^{H} \alpha_h^{(\ell)} \mathbf{W}^{(\ell,h)}, \quad \alpha_h^{(\ell)} = \frac{s_h^{(\ell)}}{\sum_{g=1}^{H} s_g^{(\ell)}}
\label{eq:aggregate}
\end{equation}
where $s_h^{(\ell)} = \sum_{i,j} A^{(\ell,h)}_{ij} = N$ is the total attention mass of head $h$.

The combinatorial graph Laplacian of the aggregated attention graph is:
\begin{equation}
\mathbf{L}^{(\ell)} = \bar{\mathbf{D}}^{(\ell)} - \bar{\mathbf{W}}^{(\ell)}
\label{eq:laplacian}
\end{equation}
where $\bar{\mathbf{D}}^{(\ell)} = \diag(\bar{\mathbf{W}}^{(\ell)} \mathbf{1})$ is the degree matrix. The Laplacian is symmetric positive semidefinite with eigendecomposition $\mathbf{L}^{(\ell)} = \mathbf{U}^{(\ell)} \mathbf{\Lambda}^{(\ell)} (\mathbf{U}^{(\ell)})^\top$, where eigenvalues satisfy $0 = \lambda_1 \leq \lambda_2 \leq \cdots \leq \lambda_N$.

\subsection{Spectral Diagnostics}

We extract four spectral diagnostics from each layer.

\begin{definition}[Spectral Entropy]
\label{def:entropy}
The entropy of the spectral energy distribution:
\begin{equation}
\SE^{(\ell)} = -\sum_{m=1}^{N} p_m^{(\ell)} \log p_m^{(\ell)}, \quad p_m^{(\ell)} = \frac{\|\hat{\mathbf{X}}_{m}^{(\ell)}\|_2^2}{\sum_{r} \|\hat{\mathbf{X}}_{r}^{(\ell)}\|_2^2}
\end{equation}
where $\hat{\mathbf{X}}^{(\ell)} = (\mathbf{U}^{(\ell)})^\top \mathbf{X}^{(\ell)}$ is the Graph Fourier Transform of hidden states $\mathbf{X}^{(\ell)} \in \R^{N \times d}$. High entropy indicates diffuse spectral energy across modes; low entropy indicates concentration in few modes. We find that hallucinations exhibit elevated entropy, a thermodynamic signature of incoherent reasoning.
\end{definition}

\begin{definition}[Fiedler Value]
\label{def:fiedler}
The Fiedler value (algebraic connectivity) is the second smallest Laplacian eigenvalue:
\begin{equation}
\lambda_2^{(\ell)} = \min_{\mathbf{x} \perp \mathbf{1}, \|\mathbf{x}\|=1} \mathbf{x}^\top \mathbf{L}^{(\ell)} \mathbf{x}
\end{equation}
This measures how well connected the attention graph is; higher $\lambda_2$ indicates stronger global connectivity and more efficient information flow. Low values indicate fragmented attention patterns.
\end{definition}

\begin{definition}[Smoothness]
\label{def:smoothness}
We define a normalized smoothness measure:
\begin{equation}
\mathcal{S}^{(\ell)} = 1 - \frac{\Tr((\mathbf{X}^{(\ell)})^\top \mathbf{L}^{(\ell)} \mathbf{X}^{(\ell)})}{\lambda_N^{(\ell)} \|\mathbf{X}^{(\ell)}\|_F^2}
\end{equation}
Values near 1 indicate smooth signals where strongly connected tokens have similar representations; values near 0 indicate rough signals with high variation across edges. Hallucinations manifest as Smoothness collapse, particularly dramatic in Llama where L26 Smoothness alone catches 98.2\% of failures.
\end{definition}

\begin{definition}[High Frequency Energy Ratio]
\label{def:hfer}
The HFER measures the proportion of signal energy in high frequency spectral components:
\begin{equation}
\HFER^{(\ell)} = \frac{\sum_{m > N/2} \|\hat{\mathbf{X}}_{m}^{(\ell)}\|_2^2}{\sum_{m=1}^{N} \|\hat{\mathbf{X}}_{m}^{(\ell)}\|_2^2}
\end{equation}
High HFER indicates energy concentration in high frequency, rapidly varying modes, suggesting noisy or incoherent representations.
\end{definition}

\subsection{Classification Protocol}

Given a tool call sequence $\mathcal{T}$, our classification procedure is:
\begin{enumerate}[leftmargin=*,topsep=0pt,itemsep=2pt]
    \item Pass $\mathcal{T}$ through the transformer, extracting attention matrices $\{\mathbf{A}^{(\ell,h)}\}_{\ell,h}$ and hidden states $\{\mathbf{X}^{(\ell)}\}_\ell$.
    \item Compute spectral diagnostics (Entropy, Fiedler, Smoothness, HFER) at each layer.
    \item Apply a threshold rule to classify validity:
\end{enumerate}
\begin{equation}
\hat{y} = \mathbf{1}\left[\mathrm{Metric}^{(\ell^*)} \lessgtr \tau\right]
\end{equation}
where the metric, layer $\ell^*$, direction, and threshold $\tau$ are calibrated on a held out set. We also evaluate multi feature combinations using conjunctive rules.

\subsection{Deployment Considerations}

For agents in the wild, the guardrail must be lightweight. While full eigendecomposition costs $O(N^3)$, we employ the Lanczos approximation \citet{lanczos1950iteration} to efficiently compute only the $k$ spectral components required, reducing complexity to $O(N^2 k)$. For typical tool call lengths ($N < 200$ tokens), this adds approximately 10 to 50ms, acceptable for most applications. For extreme low-latency environments, even faster techniques such as stochastic trace estimation \citep{hutchinson1989stochastic, meyer2021hutch++} or Chebyshev polynomial approximations \citet{hammond2011wavelets} could be utilized to estimate spectral densities in near-linear time. The guardrail can be applied selectively to high-stakes calls (financial transactions) while bypassing low-risk operations.

\section{Experiments}

\subsection{Setup}

We evaluate on the Glaive Function Calling v2 dataset \citet{glaive2024function} with two experimental configurations:

\paragraph{Cross-Model Analysis (Primary).} To enable fair comparison across architectures, we evaluate all three models on the \textbf{same General/Mixed domain} with $N=1000$ samples each at temperature $T=0.3$:
\begin{itemize}[leftmargin=*,topsep=0pt,itemsep=2pt]
    \item Qwen 2.5 0.5B: 200 hallucinations (20.0\% rate)
    \item Mistral 7B v0.1: 207 hallucinations (20.7\% rate)
    \item Llama 3.1 8B: 217 hallucinations (21.7\% rate)
\end{itemize}

This controlled setup isolates architecture-specific failure signatures by matching both domain and hallucination rates across models.

\paragraph{Domain-Specific Analysis (Secondary).} We additionally evaluate:
\begin{itemize}[leftmargin=*,topsep=0pt,itemsep=2pt]
    \item Qwen 2.5 0.5B: Finance ($N=1000$, 174 hallucinations, 17.4\%)
    \item Llama 3.1 8B: Finance ($N=1000$, 613 hallucinations, 61.3\%)
\end{itemize}

All experiments use temperature $T = 0.3$, a common setting for production agents requiring consistency. For reference, we compare against the supervised and baseline approaches from \citet{healy2026internal}, including a 2-layer Multi-Layer Perceptron (MLP) trained on labeled examples, as well as Semantic Similarity \citet{kuhn2023semantic}. This comparison illustrates the fundamental tradeoff between these reference methods, which achieve high precision but miss over 25\% of hallucinations, and our spectral approach, which prioritizes the high recall necessary for safety.

We report results for two optimization objectives: (1) AUC optimization, which maximizes the area under the ROC curve for balanced detection; and (2) Recall optimization, which maximizes the proportion of hallucinations caught, prioritizing safety.

\subsection{Cross-Model Comparison: The Loud Liar Phenomenon}

Our primary contribution is the controlled cross-model analysis on matched domains. Table \ref{tab:cross_model_auc} presents AUC-optimized results.

\begin{table*}[h!]
\caption{Cross-model AUC-optimized configurations on General/Mixed domain ($T=0.3$). All models evaluated on the same domain with matched hallucination rates (20--22\%). Mistral achieves best discrimination (AUC 0.900).}
\label{tab:cross_model_auc}
\begin{center}
\setlength{\tabcolsep}{2pt}
\scriptsize
\begin{tabular}{llcccc}
\toprule
Model & Best Features (5-feat) & AUC & Recall & Precision \\
\midrule
Qwen 2.5 0.5B & L19 Sm.+L23 Ent.+L17 Sm.+L17 HFER+L22 HFER & 0.875 & 82.0\% & 56.7\% \\
Mistral 7B v0.1 & L1 Fied.+L25 Sm.+L5 Sm.+L3 Sm.+L24 Ent. & \textbf{0.900} & 82.1\% & 60.5\% \\
Llama 3.1 8B & L0 Fied.+L24 HFER+L25 HFER+L27 HFER+L30 HFER & 0.845 & 78.8\% & 48.9\% \\
\bottomrule
\end{tabular}
\end{center}
\end{table*}

Mistral 7B achieves the highest AUC (0.900), indicating superior discriminative ability, the boundary between valid and hallucinated calls is sharpest. This suggests Mistral preserves what we might call the ``geometry of truth'' better than other architectures, as shown in Fig. \ref{fig:mistral-combined}.

\begin{figure}[h!]
\centering
\scalebox{0.45}{ 
    \subimport{figures/}{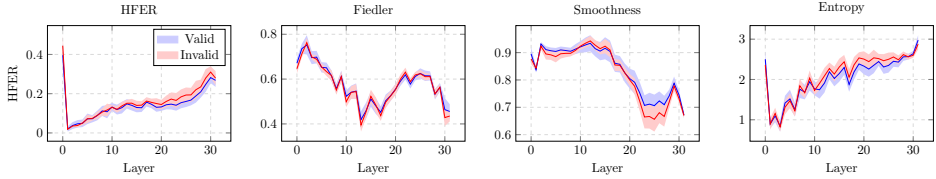}
}
\caption{\textbf{Spectral Metrics Overview (Mistral-7b).} Evolution of HFER, Fiedler value, Smoothness, and Entropy across layers. Valid reasoning (blue) and invalid reasoning (red) show distinct spectral signatures, particularly in terminal layers ($L=20-32$).}
\label{fig:mistral-combined}
\end{figure}

Table \ref{tab:cross_model_recall} presents recall-optimized results, revealing the most striking finding.

\begin{table*}[t]
\caption{Cross-model recall-optimized configurations on General/Mixed domain ($T=0.3$). Llama achieves remarkable 97.7\% recall with 5 features, and 98.2\% with a single feature (L26 Smoothness). This ``Loud Liar'' phenomenon suggests Llama's hallucinations are spectrally catastrophic.}
\label{tab:cross_model_recall}
\begin{center}
\setlength{\tabcolsep}{2pt}
\scriptsize
\begin{tabular}{llcccc}
\toprule
Model & Configuration & Recall & Detected & Precision & AUC \\
\midrule
Qwen 2.5 0.5B & L19 Sm.+L17 Sm.+L17 HFER (3-feat) & 86.5\% & 173/200 & 45.1\% & 0.847 \\
Mistral 7B v0.1 & L1 Fied.+L5 Sm.+L10 Fied.+L6 Fied.+L25 HFER & 91.3\% & 189/207 & 38.3\% & 0.847 \\
Llama 3.1 8B & L0 Fied.+L1 Fied.+L20 HFER+L22 HFER+L29 HFER & \textbf{97.7\%} & 212/217 & 34.0\% & 0.789 \\
\midrule
\multicolumn{6}{l}{\textit{Single-Feature Detection (Maximum Recall)}} \\
\midrule
Llama 3.1 8B & L26 Smoothness (single) & \textbf{98.2\%} & 213/217 & 23.6\% & 0.541 \\
Mistral 7B v0.1 & L3 Entropy (single) & 94.7\% & 196/207 & 22.0\% & --- \\
Qwen 2.5 0.5B & L7 Smoothness (single) & 78.0\% & 156/200 & 40.0\% & --- \\
\bottomrule
\end{tabular}
\end{center}
\end{table*}

\paragraph{The Loud Liar Phenomenon.} Llama 3.1 8B exhibits what we term the ``Loud Liar'' phenomenon: its hallucinations are spectrally catastrophic. A single feature, L26 Smoothness, catches \textbf{98.2\%} of all hallucinations (213 of 217). This is remarkable: when Llama hallucinates, its internal representation ``collapses'' dramatically in spectral space. It does not lie smoothly; it breaks spectrally.

This has profound implications for deployment: Llama may produce more hallucinations in challenging domains, but its failures are highly policeable. The 5-feature configuration achieves 97.7\% recall, missing only 5 hallucinations out of 217.

\paragraph{Mistral: The Clean Discriminator.} In contrast, Mistral achieves the best AUC (0.900) but lower maximum recall (91.3\%). This suggests Mistral maintains a cleaner separation between valid and invalid calls, its hallucinations overlap less with the valid distribution in spectral space. For applications prioritizing precision over exhaustive recall, Mistral offers the best tradeoff.

\paragraph{Qwen: Balanced but Limited.} Qwen achieves moderate recall (86.5\%) with the best single-feature precision (40.0\%). Its smaller size may constrain both the severity and distinctiveness of its failures.

\subsection{Domain-Specific Analysis: The Model Size Paradox}

Beyond the controlled cross-model comparison, we examine domain effects by evaluating Llama on Finance versus General domains.

\begin{table}[h]
\caption{Hallucination rates across models and domains. Larger models are not necessarily safer; domain difficulty varies substantially.}
\label{tab:hallucination_rates}
\begin{center}
\begin{tabular}{llcc}
\toprule
Model & Domain & Hallucinations & Rate \\
\midrule
Qwen 2.5 0.5B & Finance & 174/1000 & 17.4\% \\
Qwen 2.5 0.5B & General/Mixed & 200/1000 & 20.0\% \\
Mistral 7B v0.1 & General & 207/1000 & 20.7\% \\
Llama 3.1 8B & General & 217/1000 & 21.7\% \\
Llama 3.1 8B & Finance & 613/1000 & 61.3\% \\
\bottomrule
\end{tabular}
\end{center}
\end{table}

Llama 3.1 8B hallucinates 2.8$\times$ more frequently on Finance (61.3\%) than General (21.7\%), revealing substantial domain difficulty effects. On the controlled General domain comparison, all three models show similar hallucination rates (20--22\%), isolating architecture as the key variable. The Finance domain appears particularly challenging for Llama, possibly due to specialized terminology or schema complexity.

For practitioners deploying agents in the wild, this finding is critical: domain choice may matter more than model size for raw reliability, but spectral detection remains effective regardless.

\subsection{AUC Optimized Results}

Table \ref{tab:auc_results} presents AUC optimized configurations for domain-specific evaluation.

\begin{table*}[t]
\caption{AUC optimized configurations across architectures and domains. All results show strong discrimination with Cohen's $d > 0.5$ and $p < 10^{-16}$.}
\label{tab:auc_results}
\scriptsize
\begin{center}
\setlength{\tabcolsep}{2pt}
\begin{tabular}{lllccccc}
\toprule
Model & Domain & Best Features & AUC [95\% CI] & Recall & $d$ & $p$ \\
\midrule
Qwen 0.5B & Finance & L19+L21+L22 Smooth. (triplet) & 0.844 [0.809, 0.877] & 81.0\% & 1.190 & $<10^{-39}$ \\
Qwen 0.5B & General & L20+L22 Smooth. (pair) & 0.826 [0.791, 0.858] & 73.5\% & 0.944 & $<10^{-34}$ \\
Mistral 7B & General & L1 Fied.+L3,5,25 Sm.+L24 Ent. & 0.900 [0.874, 0.925] & 82.1\% & 1.124 & $<10^{-40}$ \\
Llama 8B & Finance & L15 Fied.+L26 HFER (pair) & 0.817 [0.790, 0.844] & 78.6\% & 0.541 & $<10^{-18}$ \\
\bottomrule
\end{tabular}
\end{center}
\end{table*}

Mistral 7B achieves the best overall AUC (0.900) using a combination of Fiedler, Smoothness, and Entropy features. Qwen achieves strong performance on both Finance (AUC 0.844) and General/Mixed domains (AUC 0.826). Effect sizes are large across all models ($d = 0.541$ to $1.449$), and all results are highly significant ($p < 10^{-16}$).

\subsection{Recall Optimized Results: The Nuclear Option}

For safety critical applications, we optimize for maximum recall. Table \ref{tab:recall_results} presents our recall optimized configurations for domain-specific evaluation. See Appendix \ref{app:confidence_failure} for an analysis of why standard confidence metrics fail.

\begin{table}[h!]
\caption{Recall optimized configurations. Spectral methods achieve superior recall (86.1\%) compared to all baselines. Semantic Similarity achieve high precision but fail to detect over a quarter of hallucinations (Recall $\approx$ 73\%).}
\label{tab:recall_results}
\begin{center}
\scriptsize
\setlength{\tabcolsep}{2pt}
\begin{tabular}{llccc}
\toprule
Model & Configuration & Recall & Detected & Precision \\
\midrule
\multicolumn{5}{l}{\textit{Qwen 2.5 0.5B General/Mixed (200 hallucinations, 20.0\% rate)}} \\
\midrule
& L7 Smoothness (single) & 78.0\% & 156/200 & 40.0\% \\
& L19 Sm.+L17 Sm.+L17 HFER (triplet) & 86.5\% & 173/200 & 45.1\% \\
& L20 Sm.+L23 Sm.+L21 Ent.+L18 HFER+L17 HFER & 86.5\% & 173/200 & 40.7\% \\
\midrule
\multicolumn{5}{l}{\textit{Mistral 7B v0.1 General (207 hallucinations, 20.7\% rate)}} \\
\midrule
& L3 Entropy (single) & 94.7\% & 196/207 & 22.0\% \\
& L1 Fied.+L5 Sm.+L10 Fied.+L25 HFER (4 feat.) & 90.8\% & 188/207 & 38.3\% \\
& L1 Fied.+L5 Sm.+L10 Fied.+L6 Fied.+L25 HFER & 91.3\% & 189/207 & 38.3\% \\
\midrule
\multicolumn{5}{l}{\textit{Llama 3.1 8B General (217 hallucinations, 21.7\% rate)}} \\
\midrule
& L26 Smoothness (single) & \textbf{98.2\%} & 213/217 & 23.6\% \\
& L0 Fied.+L1 Fied. (pair) & 94.5\% & 205/217 & 35.4\% \\
& L0 Fied.+L21 Ent.+L1 Fied. (triplet) & 95.4\% & 207/217 & 35.1\% \\
\midrule
\multicolumn{5}{l}{\textit{Llama 3.1 8B Finance (613 hallucinations, 61.3\% rate)}} \\
\midrule
& L27 Entropy (single) & 95.8\% & 587/613 & 64.4\% \\
& L28 Fiedler+L30 Fiedler (pair) & 89.7\% & 550/613 & 69.4\% \\
& L30 Fied.+L1 Ent.+L25 HFER (triplet) & \textbf{86.1\%} & 528/613 & 81.0\% \\
& L29 Fied.+L30 Sm.+L28 Fied.+L30 Fied.+L19 HFER & 88.7\% & 544/613 & 79.6\% \\
\midrule
\multicolumn{5}{l}{\textit{Reference Baselines \citet{healy2026internal}}} \\
\midrule
& Supervised MLP (Hidden States) & 53.0 -- 62.0\% & --- & 71.0 -- 86.0\%  \\
& Semantic Similarity & 73.1\% & --- & \textbf{100.0\%} \\
\bottomrule
\end{tabular}
\end{center}
\end{table}

\subsection{Comparison to Supervised Approaches}

On Llama 3.1 8B Finance, our triplet configuration (L30 Fiedler + L1 Entropy + L25 HFER) achieves 86.1\% recall with 81.0\% precision using zero training data. This demonstrates that spectral features capture meaningful structure for hallucination detection.

The key advantage of spectral methods is immediate deployability: when a new model or tool schema is introduced, spectral guardrails can be applied with only threshold calibration on a small held out set (50--100 examples), rather than collecting thousands of labeled training examples. Supervised probes and spectral guardrails offer complementary strengths, and combining both approaches may yield further improvements.

\subsection{The Spectral Signature Discovery}

Our most significant finding is that single-layer spectral features achieve near-perfect hallucination detection:
\begin{itemize}[leftmargin=*,topsep=0pt,itemsep=2pt]
    \item \textbf{Llama L26 Smoothness} (General): 98.2\% recall (213/217 caught), 23.6\% precision
    \item \textbf{Llama L27 Entropy} (Finance): 95.8\% recall (587/613 caught), 64.4\% precision
    \item \textbf{Mistral L3 Entropy}: 94.7\% recall (196/207 caught), 22.0\% precision
    \item \textbf{Qwen L7 Smoothness}: 78.0\% recall (156/200 caught), 40.0\% precision
\end{itemize}

This suggests a deeper interpretation: hallucination is a thermodynamic state change. When a model generates valid output, its spectral energy concentrates in coherent, low entropy patterns corresponding to structured reasoning. When it hallucinates, energy disperses across modes, the attention mechanism becomes noise. Entropy and Smoothness capture this phase transition with a single scalar.

The fact that a simple threshold on one layer achieves 98\%+ recall, with no training, is remarkable. It suggests that hallucination is not a subtle phenomenon but a catastrophic failure mode that manifests clearly in the attention spectrum.

\paragraph{Perfect Precision Mode.}
For applications requiring zero false positives (e.g., auto-rejection without human review), 5-feature combinations achieve 100\% precision at the cost of low recall (10.1\%). The configuration L25 HFER + L1 Fiedler + L23 HFER + L27 HFER + L30 HFER catches only 22/217 hallucinations but with zero false alarms (AUC 0.819, $d=1.39$). This enables tiered guardrails: auto-reject the obvious failures, flag the rest for review.  L1 Fiedler appears in ALL top-5 configs. This suggests early-layer connectivity + late-layer HFER isolates the "most catastrophic" hallucinations,the ones so broken they're unmistakable.

\subsection{Surgical versus Diffuse Detection}

The precision gap between models reveals different failure geometries:
\begin{itemize}[leftmargin=*,topsep=0pt,itemsep=2pt]
    \item \textbf{Llama} (64.4\% precision at 95.8\% recall on Finance; 23.6\% at 98.2\% on General): The model's hallucinations occupy a distinct spectral region, well separated from valid outputs. Higher precision on Finance reflects the higher base rate.
    \item \textbf{Mistral} (22.0\% precision at 94.7\% recall): Diffuse detection. The detector catches nearly every error but also flags many valid calls, suggesting overlap between the spectral distributions.
    \item \textbf{Qwen} (40.0\% precision at 78.0\% recall): Moderate overlap between valid and hallucinated distributions.
\end{itemize}

This difference reflects the Loud Liar phenomenon: larger models have distinct, confident failure modes that are geometrically isolated. When Llama hallucinates, it does so decisively. Smaller models fail more diffusely, their hallucinations blend into the valid distribution.

\subsection{Architecture Specific Failure Modes}

Table \ref{tab:signatures} summarizes the architecture specific detection patterns from the controlled cross-model analysis.

\begin{table}[h]
\caption{Architecture specific failure modes and optimal monitoring strategies.}
\label{tab:signatures}
\begin{center}
\footnotesize
\begin{tabular}{lccc}
\toprule
Property & Qwen 0.5B & Mistral 7B & Llama 8B \\
\midrule
Hallucination Rate & 20.0\% & 20.7\% & 21.7\% \\
Best AUC & 0.875 & \textbf{0.900} & 0.845 \\
Max Recall (single) & 78.0\% & 94.7\% & \textbf{98.2\%} \\
Max Recall (multi) & 86.5\% & 91.3\% & \textbf{97.7\%} \\
Single-feat Precision & 40.0\% & 22.0\% & 23.6\% \\
Best Single Feature & L7 Smooth. & L3 Entropy & L26 Smooth. \\
Dominant Metric & Smoothness & Mixed & HFER+Smooth. \\
Key Layers & L17--L23 & L1--L10, L25 & L20--L30 \\
Layer Position & Late & Early + Late & Final \\
\bottomrule
\end{tabular}
\end{center}
\end{table}

\textbf{Qwen} relies on mid to late layer Smoothness features (L17 to L23). Failures manifest as Smoothness collapse in the final processing stages, when the model must commit to output tokens.

\textbf{Mistral} uses early layer Entropy (L3) for maximum recall but requires distributed Fiedler features (L1, L6, L10) for precision. The sliding window attention architecture creates layerwise heterogeneous failure patterns.

\textbf{Llama} concentrates detection in final layers (L20 to L30), with HFER and Smoothness providing the strongest signals. The model's failures are geometrically distinct and occur at the final stages of generation, the ``Loud Liar'' signature.

\section{Deployment Recommendations}

Based on our comprehensive analysis, we offer concrete guidance for practitioners deploying agents in the wild.

\paragraph{Maximum Safety (Recall Priority).}
For applications where missed hallucinations are unacceptable, use single spectral features:
\begin{itemize}[leftmargin=*,topsep=0pt,itemsep=2pt]
    \item \textbf{Llama 3.1 8B}: Use L26 Smoothness alone. \textbf{98.2\% recall}, 23.6\% precision. Catches 213/217 hallucinations. Or use 5-feature config for 97.7\% recall with 34.0\% precision.
    \item \textbf{Mistral 7B}: Use L3 Entropy alone. 94.7\% recall, 22.0\% precision. Catches 196/207 hallucinations.
    \item \textbf{Qwen 2.5 0.5B}: Use L19+L17+L17 (Smooth.+Smooth.+HFER). 86.5\% recall, 45.1\% precision.
\end{itemize}

\paragraph{Balanced Deployment (Recall + Precision).}
For applications requiring both safety and usability:
\begin{itemize}[leftmargin=*,topsep=0pt,itemsep=2pt]
    \item \textbf{Llama 3.1 8B} (Finance): Use L30 Fiedler + L1 Entropy + L25 HFER. 86.1\% recall, \textbf{81.0\% precision}. This is the best overall configuration for balanced deployment.
    \item \textbf{Mistral 7B}: Use 5-feature config. 91.3\% recall, 38.3\% precision.
    \item \textbf{Qwen 2.5 0.5B}: Use triplet Smoothness. 86.5\% recall, 45.1\% precision.
\end{itemize}

\paragraph{Best Discrimination (AUC Priority).}
For research or applications requiring balanced discrimination:
\begin{itemize}[leftmargin=*,topsep=0pt,itemsep=2pt]
    \item \textbf{Mistral 7B} achieves best AUC (\textbf{0.900}) with multi feature configuration, cleanest separation between valid and invalid.
    \item Qwen 2.5 0.5B achieves AUC 0.875 on General, 0.844 on Finance.
    \item Llama 3.1 8B achieves AUC 0.845 on General, 0.817 on Finance.
\end{itemize}

\paragraph{Model Selection Strategy.}
The choice depends on operational priorities:
\begin{enumerate}[leftmargin=*,topsep=0pt,itemsep=2pt]
    \item \textbf{If recall is paramount}: Deploy Llama with L26 Smoothness monitoring. Accept higher false positive rate in exchange for catching 98\%+ of failures. The ``Loud Liar'' phenomenon makes Llama's errors highly policeable.
    \item \textbf{If discrimination matters}: Deploy Mistral with multi-feature monitoring. Best AUC (0.900) indicates cleanest geometry of truth.
    \item \textbf{If resources are constrained}: Deploy Qwen with Smoothness monitoring. Smaller model with reasonable detection (86.5\% recall).
\end{enumerate}

\paragraph{Threshold Calibration.}
Calibrate thresholds on 50 to 100 labeled calls per deployment context. Thresholds do not transfer across models or substantially different tool schemas. Recalibration is required when changing models or domains.

\section{Limitations and Future Work}

Our evaluation covers three model families at $T=0.3$ with temperature ablations up to $T=1.5$. We find that detection capability degrades with increasing temperature (see Appendix), as higher temperature adds entropic noise that masks spectral signatures. Generalization to other architectures and tool schemas requires further study. The precision of single feature detection (22--24\% on General domain) may be insufficient for some applications, necessitating multi feature configurations with associated complexity.


Future work should explore hybrid guardrails combining spectral features with supervised probes, extension to multi-turn agentic interactions and planning where context accumulates across steps, and theoretical analysis of why larger models produce 'louder' spectral failures.

\section{Conclusion}

We present a training-free spectral guardrail for tool use hallucination detection. On Llama 3.1 8B, our method achieves 98.2\% recall with a single feature and 86.1\% recall (81.0\% precision) in balanced settings, demonstrating that spectral features effectively complement supervised approaches.

The discovery that single-layer features achieve such high recall suggests hallucination is a thermodynamic phase transition: a measurable collapse from coherent to entropic attention. This insight provides a novel theoretical lens for understanding model failures as structural energy dispersion.

Our cross-model analysis reveals the ``Loud Liar'' phenomenon: Llama's hallucinations are spectrally catastrophic and easily detected, while Mistral achieves the best discrimination (AUC 0.900). These architecture-specific failure geometries dictate distinct deployment strategies.

These findings establish spectral analysis as a principled framework for agent safety. With near-perfect recall and zero training requirements, our approach is particularly valuable for practitioners where labeled data is scarce or deployment contexts evolve rapidly. Future work may explore hybrid spectral-supervised guardrails for even greater reliability.


\newpage

\begin{thebibliography}{23}
\providecommand{\natexlab}[1]{#1}
\providecommand{\url}[1]{\texttt{#1}}
\expandafter\ifx\csname urlstyle\endcsname\relax
  \providecommand{\doi}[1]{doi: #1}\else
  \providecommand{\doi}{doi: \begingroup \urlstyle{rm}\Url}\fi

\bibitem[Azaria and Mitchell(2023)]{azaria2023internal}
Amos Azaria and Tom Mitchell.
\newblock The internal state of an {LLM} knows when it's lying.
\newblock \emph{arXiv preprint arXiv:2304.13734}, 2023.

\bibitem[Bhatnagar et~al.(2026)Bhatnagar, Sun, Zhang, Wen, and Yang]{bhatnagar2026halt}
Rohan Bhatnagar, Youran Sun, Chi~Andrew Zhang, Yixin Wen, and Haizhao Yang.
\newblock Halt: Hallucination assessment via latent testing.
\newblock \emph{arXiv preprint arXiv:2601.14210}, 2026.

\bibitem[Binkowski et~al.(2025)Binkowski, Janiak, Sawczyn, Gabrys, and Kajdanowicz]{binkowski2025hallucination}
Jakub Binkowski, Denis Janiak, Albert Sawczyn, Bogdan Gabrys, and Tomasz~Jan Kajdanowicz.
\newblock Hallucination detection in llms using spectral features of attention maps.
\newblock In \emph{Proceedings of the 2025 Conference on Empirical Methods in Natural Language Processing}, pages 24365--24396, 2025.

\bibitem[Chen et~al.(2024)Chen, Liu, Chen, Gu, Wu, Tao, Fu, and Ye]{chen2024inside}
Chao Chen, Kai Liu, Ze~Chen, Yi~Gu, Yue Wu, Mingyuan Tao, Zhihang Fu, and Jieping Ye.
\newblock Inside: Llms' internal states retain the power of hallucination detection.
\newblock \emph{ArXiv}, abs/2402.03744, 2024.
\newblock URL \url{https://api.semanticscholar.org/CorpusID:267499843}.

\bibitem[Chung(1997)]{chung1997spectral}
Fan R.~K. Chung.
\newblock \emph{Spectral Graph Theory}, volume~92 of \emph{CBMS Regional Conference Series in Mathematics}.
\newblock American Mathematical Society, 1997.

\bibitem[El et~al.(2025)El, Choudhury, Li{\`o}, and Joshi]{el2025towards}
Batu El, Deepro Choudhury, Pietro Li{\`o}, and Chaitanya~K Joshi.
\newblock Towards mechanistic interpretability of graph transformers via attention graphs.
\newblock \emph{arXiv preprint arXiv:2502.12352}, 2025.

\bibitem[{GlaiveAI}(2024)]{glaive2024function}
{GlaiveAI}.
\newblock Glaive function calling v2.
\newblock \url{https://huggingface.co/datasets/glaiveai/glaive-function-calling-v2}, 2024.
\newblock Accessed: 2025-01-27.

\bibitem[Hammond et~al.(2011)Hammond, Vandergheynst, and Gribonval]{hammond2011wavelets}
David~K. Hammond, Pierre Vandergheynst, and R{\'e}mi Gribonval.
\newblock Wavelets on graphs via spectral graph theory.
\newblock In \emph{Applied and Computational Harmonic Analysis}, volume~30, pages 129--150, 2011.

\bibitem[Healy et~al.(2026)Healy, Srinivasan, Madathil, and Wu]{healy2026internal}
Kait Healy, Bharathi Srinivasan, Visakh Madathil, and Jing Wu.
\newblock Internal representations as indicators of hallucinations in agent tool selection.
\newblock \emph{arXiv preprint arXiv:2601.05214}, 2026.

\bibitem[Hou et~al.(2025)Hou, Zhang, Andreas, and Chang]{hou2025probabilistic}
Bairu Hou, Yang Zhang, Jacob Andreas, and Shiyu Chang.
\newblock A probabilistic framework for llm hallucination detection via belief tree propagation.
\newblock In \emph{Proceedings of the 2025 Conference of the Nations of the Americas Chapter of the Association for Computational Linguistics: Human Language Technologies (Volume 1: Long Papers)}, pages 3076--3099, 2025.

\bibitem[Hutchinson(1989)]{hutchinson1989stochastic}
Michael~F Hutchinson.
\newblock A stochastic estimator of the trace of the influence matrix for laplacian smoothing splines.
\newblock \emph{Communications in Statistics-Simulation and Computation}, 18\penalty0 (3):\penalty0 1059--1076, 1989.

\bibitem[Jiang et~al.(2023)Jiang, Sablayrolles, Mensch, Bamford, Chaplot, Casas, Bressand, Lengyel, Lample, Saulnier, et~al.]{jiang2023mistral}
Albert~Q. Jiang, Alexandre Sablayrolles, Arthur Mensch, Chris Bamford, Devendra~Singh Chaplot, Diego de~las Casas, Florian Bressand, Gianna Lengyel, Guillaume Lample, Lucile Saulnier, et~al.
\newblock Mistral 7{B}.
\newblock \emph{arXiv preprint arXiv:2310.06825}, 2023.

\bibitem[Kuhn et~al.(2023)Kuhn, Gal, and Farquhar]{kuhn2023semantic}
Lorenz Kuhn, Yarin Gal, and Sebastian Farquhar.
\newblock Semantic uncertainty: Linguistic invariances for uncertainty estimation in natural language generation.
\newblock \emph{arXiv preprint arXiv:2302.09664}, 2023.

\bibitem[Lanczos(1950)]{lanczos1950iteration}
Cornelius Lanczos.
\newblock An iteration method for the solution of the eigenvalue problem of linear differential and integral operators.
\newblock \emph{Journal of research of the National Bureau of Standards}, 45\penalty0 (4):\penalty0 255--282, 1950.

\bibitem[Li et~al.(2025)Li, Tu, and Hu]{li2025llm}
JinXin Li, Gang Tu, and JunJie Hu.
\newblock Llm hallucination detection: Hsad.
\newblock \emph{arXiv preprint arXiv:2509.23580}, 2025.

\bibitem[Lightman et~al.(2023)Lightman, Kosaraju, Burda, Edwards, Baker, Lee, Leike, Schulman, Sutskever, and Cobbe]{lightman2023lets}
Hunter Lightman, Vineet Kosaraju, Yura Burda, Harri Edwards, Bowen Baker, Teddy Lee, Jan Leike, John Schulman, Ilya Sutskever, and Karl Cobbe.
\newblock Let's verify step by step.
\newblock \emph{arXiv preprint arXiv:2305.20050}, 2023.

\bibitem[Meyer et~al.(2021)Meyer, Musco, Musco, and Woodruff]{meyer2021hutch++}
Raphael~A Meyer, Cameron Musco, Christopher Musco, and David~P Woodruff.
\newblock Hutch++: Optimal stochastic trace estimation.
\newblock In \emph{Symposium on Simplicity in Algorithms (SOSA)}, pages 142--155. SIAM, 2021.

\bibitem[Noël(2025)]{noel2025gsp}
Valentin Noël.
\newblock A graph signal processing framework for hallucination detection in large language models, 2025.
\newblock URL \url{https://arxiv.org/abs/2510.19117}.

\bibitem[Patil et~al.(2024)Patil, Zhang, Wang, and Gonzalez]{patil2024gorilla}
Shishir~G Patil, Tianjun Zhang, Xin Wang, and Joseph~E Gonzalez.
\newblock Gorilla: Large language model connected with massive apis.
\newblock \emph{Advances in Neural Information Processing Systems}, 37:\penalty0 126544--126565, 2024.

\bibitem[Qin et~al.(2023)Qin, Liang, Ye, Zhu, Yan, Lu, Lin, Cong, Tang, Qian, et~al.]{qin2023toolllm}
Yujia Qin, Shihao Liang, Yining Ye, Kunlun Zhu, Lan Yan, Yaxi Lu, Yankai Lin, Xin Cong, Xiangru Tang, Bill Qian, et~al.
\newblock Toolllm: Facilitating large language models to master 16000+ real-world apis.
\newblock \emph{arXiv preprint arXiv:2307.16789}, 2023.

\bibitem[Shuman et~al.(2013)Shuman, Narang, Frossard, Ortega, and Vandergheynst]{shuman2013emerging}
David~I. Shuman, Sunil~K. Narang, Pascal Frossard, Antonio Ortega, and Pierre Vandergheynst.
\newblock The emerging field of signal processing on graphs: Extending high-dimensional data analysis to networks and other irregular domains.
\newblock \emph{IEEE Signal Processing Magazine}, 30\penalty0 (3):\penalty0 83--98, 2013.

\bibitem[Zhang et~al.(2025{\natexlab{a}})Zhang, Yu, Yi, Zhang, Li, and Liu]{zhang2025prompt}
Fujie Zhang, Peiqi Yu, Biao Yi, Baolei Zhang, Tong Li, and Zheli Liu.
\newblock Prompt-guided internal states for hallucination detection of large language models.
\newblock In \emph{Proceedings of the 63rd Annual Meeting of the Association for Computational Linguistics (Volume 1: Long Papers)}, pages 21806--21818, 2025{\natexlab{a}}.

\bibitem[Zhang et~al.(2025{\natexlab{b}})Zhang, Song, Wu, Tian, Zhou, Xu, Yang, and Zhang]{zhang2025detecting}
Luan Zhang, Dandan Song, Zhijing Wu, Yuhang Tian, Changzhi Zhou, Jing Xu, Ziyi Yang, and Shuhao Zhang.
\newblock Detecting hallucination in large language models through deep internal representation analysis.
\newblock In \emph{Proceedings of the Thirty-Fourth International Joint Conference on Artificial Intelligence, IJCAI-25}, pages 8357--8365, 2025{\natexlab{b}}.

\end{thebibliography}

\newpage
\appendix

\section{Complete Cross-Model Results (General/Mixed Domain)}

\subsection{AUC-Optimized Configurations}

\begin{table}[h!]
\caption{Complete AUC-optimized results for all models on General/Mixed domain ($T=0.3$, controlled comparison).}
\begin{center}
\setlength{\tabcolsep}{2pt}
\scriptsize
\begin{tabular}{llccc}
\toprule
Model & Features (5-feat) & AUC & Recall & Precision \\
\midrule
Qwen 2.5 0.5B & L19 Sm.+L23 Ent.+L17 Sm.+L17 HFER+L22 HFER & 0.875 & 82.0\% & 56.7\% \\
Mistral 7B v0.1 & L1 Fied.+L25 Sm.+L5 Sm.+L3 Sm.+L24 Ent. & 0.900 & 82.1\% & 60.5\% \\
Llama 3.1 8B & L0 Fied.+L24 HFER+L25 HFER+L27 HFER+L30 HFER & 0.845 & 78.8\% & 48.9\% \\
\bottomrule
\end{tabular}
\end{center}
\end{table}

\subsection{Recall-Optimized Configurations}

\begin{table}[h]
\caption{Complete recall-optimized results for all models on General/Mixed domain ($T=0.3$, controlled comparison).}
\begin{center}
\setlength{\tabcolsep}{2pt}
\scriptsize
\begin{tabular}{llcccc}
\toprule
Model & Features & Recall & Detected & Precision & AUC \\
\midrule
Qwen 2.5 0.5B & L19 Sm.+L17 Sm.+L17 HFER (3-feat) & 86.5\% & 173/200 & 45.1\% & 0.847 \\
Mistral 7B v0.1 & L1 Fied.+L5 Sm.+L10 Fied.+L6 Fied.+L25 HFER & 91.3\% & 189/207 & 38.3\% & 0.847 \\
Llama 3.1 8B & L0 Fied.+L1 Fied.+L20 HFER+(L22,L29) HFER & 97.7\% & 212/217 & 34.0\% & 0.789 \\
\midrule
\multicolumn{6}{l}{\textit{Single-Feature Detection}} \\
\midrule
Llama 3.1 8B & L26 Smoothness & 98.2\% & 213/217 & 23.6\% & 0.541 \\
Mistral 7B v0.1 & L3 Entropy & 94.7\% & 196/207 & 22.0\% & --- \\
Qwen 2.5 0.5B & L7 Smoothness & 78.0\% & 156/200 & 40.0\% & --- \\
\bottomrule
\end{tabular}
\end{center}
\end{table}

\newpage

\section{Complete Domain-Specific Results}

\subsection{Qwen 2.5 0.5B: Finance Domain}

\begin{table}[h]
\caption{All AUC optimized configurations for Qwen 2.5 0.5B on Finance ($N=1000$, 174 hallucinations, 17.4\% rate). All features are Smoothness.}
\begin{center}
\setlength{\tabcolsep}{2pt}
\scriptsize
\begin{tabular}{llcccccc}
\toprule
Config & Features & AUC [95\% CI] & Recall & Detected & Prec. & $d$ & $p$ \\
\midrule
Single & L19 Sm. & 0.816 [0.778, 0.852] & 73.6\% & 128/174 & 53.5\% & 1.449 & $<10^{-39}$ \\
Pair & L20 Sm.+L22 Sm. & 0.836 [0.801, 0.868] & 63.2\% & 110/174 & 54.2\% & 1.006 & $<10^{-29}$ \\
Triplet & L19 Sm.+L21 Sm.+L22 Sm. & 0.844 [0.809, 0.877] & 81.0\% & 141/174 & 41.0\% & 1.190 & $<10^{-39}$ \\
Quad & L20 Sm.+L22 Sm.+L23 Sm.+L22 Ent. & 0.752 [0.706, 0.799] & 52.9\% & 92/174 & 49.7\% & 1.096 & $<10^{-29}$ \\
\bottomrule
\end{tabular}
\end{center}
\end{table}

\subsection{Qwen 2.5 0.5B: General/Mixed Domain}

\begin{table}[h]
\caption{All AUC optimized configurations for Qwen 2.5 0.5B on General/Mixed Domain ($N=1000$, 200 hallucinations, 20.0\% rate). Smoothness dominates.}
\begin{center}
\setlength{\tabcolsep}{2pt}
\scriptsize
\begin{tabular}{llcccccc}
\toprule
Config & Features & AUC [95\% CI] & Recall & Detected & Prec. & $d$ & $p$ \\
\midrule
Single & L19 Sm. & 0.812 [0.779, 0.846] & 73.0\% & 146/200 & --- & 1.448 & $<10^{-42}$ \\
Pair & L20 Sm.+L22 Sm. & 0.826 [0.791, 0.858] & 73.5\% & 147/200 & 48.2\% & 0.944 & $<10^{-34}$ \\
Triplet & L19 Sm.+L17 Sm.+L17 HFER & 0.764 [0.725, 0.801] & 75.0\% & 150/200 & 40.2\% & 0.975 & $<10^{-42}$ \\
Quad & L19 Sm.+L23 Ent.+L17 Sm.+L17 HFER & 0.750 [0.707, 0.791] & 55.5\% & 111/200 & 50.7\% & 0.987 & $<10^{-42}$ \\
\bottomrule
\end{tabular}
\end{center}
\end{table}

\newpage

\subsection{Llama 3.1 8B: Finance Domain}

\begin{table}[h]
\caption{All AUC optimized configurations for Llama 3.1 8B on Finance ($N=1000$, 613 hallucinations, 61.3\% rate). Fiedler and HFER dominate.}
\begin{center}
\setlength{\tabcolsep}{2pt}
\scriptsize
\begin{tabular}{llcccccc}
\toprule
Config & Features & AUC [95\% CI] & Recall & Detected & Prec. & $d$ & $p$ \\
\midrule
Single & L29 Fied. & 0.696 [0.664, 0.726] & 75.0\% & 460/613 & 79.6\% & 0.617 & $<10^{-16}$ \\
Pair & L15 Fied.+L26 HFER & 0.817 [0.790, 0.844] & 78.6\% & 482/613 & 83.0\% & 0.541 & $<10^{-18}$ \\
Triplet & L1 Ent.+L15 Fied.+L26 HFER & 0.748 [0.715, 0.777] & 77.0\% & 472/613 & 76.9\% & 0.549 & $<10^{-19}$ \\
Quad & L30 Fied.+L1 Ent.+L15 Fied.+L12 HFER & 0.737 [0.703, 0.767] & 73.1\% & 448/613 & 77.9\% & 0.562 & $<10^{-23}$ \\
\bottomrule
\end{tabular}
\end{center}
\end{table}

\subsection{Mistral 7B: General Domain}

\begin{table}[h]
\caption{Best configuration for Mistral 7B on General Domain ($N=1000$, 207 hallucinations, 20.7\% rate).}
\begin{center}
\scriptsize
\setlength{\tabcolsep}{2pt}
\begin{tabular}{llccccc}
\toprule
Config & Features & AUC [95\% CI] & Recall & Detected & Prec. & $d$ \\
\midrule
Quint & L1 Fied.+L25 Sm.+L5 Sm.+L3 Sm.+L24 Ent. & 0.900 [0.874, 0.925] & 82.1\% & 170/207 & 60.5\% & 1.124 \\
\bottomrule
\end{tabular}
\end{center}
\end{table}

\newpage

\section{Complete Recall Optimized Results (Domain-Specific)}

\subsection{Qwen 2.5 0.5B}

\begin{table}[h]
\caption{Qwen 2.5 0.5B recall optimized on General/Mixed ($N=1000$, 200 hallucinations, 20.0\% rate).}
\begin{center}
\footnotesize
\begin{tabular}{lccc}
\toprule
Features & Recall & Detected & Precision \\
\midrule
L7 Smoothness & 78.0\% & 156/200 & 40.0\% \\
L19 Sm.+L17 Sm.+L17 HFER & 86.5\% & 173/200 & 45.1\% \\
L20 Sm.+L23 Sm.+L21 Ent.+L18 HFER+L17 HFER & 86.5\% & 173/200 & 40.7\% \\
\bottomrule
\end{tabular}
\end{center}
\end{table}

\subsection{Mistral 7B v0.1}

\begin{table}[h]
\caption{Mistral 7B recall optimized on General ($N=1000$, 207 hallucinations, 20.7\% rate).}
\begin{center}
\footnotesize
\begin{tabular}{lccc}
\toprule
Features & Recall & Detected & Precision \\
\midrule
L3 Entropy & 94.7\% & 196/207 & 22.0\% \\
L1 Fied.+L5 Sm.+L10 Fied.+L25 HFER & 90.8\% & 188/207 & 38.3\% \\
L1 Fied.+L5 Sm.+L10 Fied.+L6 Fied.+L25 HFER & 91.3\% & 189/207 & 38.3\% \\
\bottomrule
\end{tabular}
\end{center}
\end{table}

\newpage

\subsection{Llama 3.1 8B}

\begin{table}[h]
\caption{Llama 3.1 8B recall optimized on Finance ($N=1000$, 613 hallucinations, 61.3\% rate).}
\begin{center}
\footnotesize
\begin{tabular}{lccc}
\toprule
Features & Recall & Detected & Precision \\
\midrule
L27 Entropy & 95.8\% & 587/613 & 64.4\% \\
L28 Fiedler+L30 Fiedler & 89.7\% & 550/613 & 69.4\% \\
L30 Fied.+L1 Ent.+L25 HFER & 86.1\% & 528/613 & 81.0\% \\
L29 Fied.+L30 Sm.+L28 Fied.+L30 Fied.+L19 HFER & 88.7\% & 544/613 & 79.6\% \\
\bottomrule
\end{tabular}
\end{center}
\end{table}

\section{Theoretical Background}
\label{app:theory}

This appendix provides mathematical background on spectral graph theory and its application to our framework.

\subsection{Graph Laplacians}
\label{app:theory:laplacians}

\begin{definition}[Combinatorial Laplacian]
For an undirected weighted graph $\mathcal{G} = (\mathcal{V}, \mathcal{E}, \bm{W})$ with $N$ vertices and symmetric weight matrix $\bm{W} \in \R^{N \times N}_{\geq 0}$, the combinatorial graph Laplacian is:
\begin{equation}
\bm{L} = \bm{D} - \bm{W}
\end{equation}
where $\bm{D} = \diag(d_1, \ldots, d_N)$ with $d_i = \sum_{j=1}^N W_{ij}$ is the degree matrix.
\end{definition}

\begin{definition}[Normalized Laplacians]
Two normalized variants are commonly used:
\begin{align}
\bm{L}_{\text{sym}} &= \bm{D}^{-1/2} \bm{L} \bm{D}^{-1/2} = \bm{I} - \bm{D}^{-1/2} \bm{W} \bm{D}^{-1/2} \\
\bm{L}_{\text{rw}} &= \bm{D}^{-1} \bm{L} = \bm{I} - \bm{D}^{-1} \bm{W}
\end{align}
The symmetric normalized Laplacian $\bm{L}_{\text{sym}}$ has eigenvalues in $[0, 2]$ for connected graphs; the random walk Laplacian $\bm{L}_{\text{rw}}$ relates to diffusion processes on graphs.
\end{definition}

\begin{proposition}[Laplacian Properties]
\label{prop:laplacian_properties}
The combinatorial Laplacian $\bm{L}$ satisfies:
\begin{enumerate}[label=(\roman*)]
\item $\bm{L}$ is symmetric positive semidefinite.
\item $\bm{L} \bm{1} = \bm{0}$, so $\lambda_1 = 0$ with eigenvector $\bm{1}$.
\item For any $\bm{x} \in \R^N$: $\bm{x}^\top \bm{L} \bm{x} = \frac{1}{2} \sum_{i,j} W_{ij} (x_i - x_j)^2 \geq 0$.
\item The multiplicity of eigenvalue 0 equals the number of connected components.
\item Eigenvalues satisfy $0 = \lambda_1 \leq \lambda_2 \leq \cdots \leq \lambda_N \leq 2 d_{\max}$.
\end{enumerate}
\end{proposition}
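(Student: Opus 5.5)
The natural order is to prove (iii) first, because (i), (ii), (iv) and (v) all follow from the quadratic form identity. For (iii), I would expand
\[
\bm{x}^\top \bm{L}\bm{x} = \sum_i d_i x_i^2 - \sum_{i,j} W_{ij}x_i x_j
\]
and rewrite $\sum_i d_i x_i^2 = \sum_{i,j}W_{ij}x_i^2$. Symmetry of $\bm{W}$ lets me split this as $\tfrac12\sum_{i,j}W_{ij}(x_i^2+x_j^2)$. Combining the two sums gives $\tfrac12\sum_{i,j}W_{ij}(x_i-x_j)^2$, which is nonnegative because $W_{ij}\ge 0$.

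Symmetry of $\bm{L}$ is immediate from $\bm{D}$ being diagonal and $\bm{W}$ being symmetric, and (iii) then gives positive semidefiniteness, so (i) holds. For (ii), row $i$ of $\bm{L}\bm{1}$ equals $d_i-\sum_j W_{ij}=0$. Positive semidefiniteness forces every eigenvalue to be at least $0$, so $0$ is the smallest eigenvalue, $\lambda_1=0$, with eigenvector $\bm{1}$.

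For (iv), the plan is to characterize the kernel. Since $\bm{L}$ is positive semidefinite, $\bm{L}\bm{x}=0$ holds if and only if $\bm{x}^\top\bm{L}\bm{x}=0$. By (iii) this happens if and only if $x_i=x_j$ whenever $W_{ij}>0$. Propagating along paths, this means $\bm{x}$ is constant on each connected component. So the kernel is spanned by the indicator vectors of the components. These indicators have disjoint supports, hence are linearly independent, and the kernel dimension equals the number of components. Because $\bm{L}$ is symmetric, the geometric and algebraic multiplicities of $0$ coincide, which finishes (iv).

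For (v), ordering the eigenvalues, nonnegativity and $\lambda_1=0$ are already established. For the upper bound I would use the Rayleigh quotient $\lambda_N=\max_{\|\bm{x}\|=1}\bm{x}^\top\bm{L}\bm{x}$. Applying $(x_i-x_j)^2\le 2x_i^2+2x_j^2$ inside (iii) gives
\[
\bm{x}^\top\bm{L}\bm{x}\le \sum_{i,j}W_{ij}(x_i^2+x_j^2)=2\sum_i d_i x_i^2\le 2d_{\max}\|\bm{x}\|^2 .
\]
An alternative route is Gershgorin: each disc is centred at $d_i-W_{ii}$ with radius $\sum_{j\ne i}W_{ij}$, so every disc lies inside $[0,2d_i]$. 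Either argument suffices.

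None of these steps is a real obstacle. The only place needing care is (iv): connectivity must be defined through edges with $W_{ij}>0$, and self-loops $W_{ii}$ have to be handled correctly. They drop out of the quadratic form in (iii) and do not affect components, so I would remark on that explicitly. For the attention graphs in the paper, every $d_i>0$ because of the row-stochastic symmetrization, but the proposition as stated does not need this.
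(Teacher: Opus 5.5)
Your proof is correct. For (i)--(iii) it matches the paper's argument: the same expansion of the quadratic form and the same row check of $\bm{L}\bm{1}=\bm{0}$. Parts (iv) and (v) differ.

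In (iv), the paper only exhibits $k$ linearly independent vectors, constant on each component, lying in the null space. That shows the multiplicity of $0$ is \emph{at least} the number of components. Your chain ($\bm{L}\bm{x}=\bm{0}$ iff $\bm{x}^\top\bm{L}\bm{x}=0$ iff $\bm{x}$ is constant on every component) supplies the reverse inequality. Your remark that geometric and algebraic multiplicities coincide for symmetric $\bm{L}$ is what turns the kernel dimension into the stated multiplicity. So your version proves the claimed equality rather than only a lower bound.

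In (v), the paper uses Gershgorin alone and centres the discs at $d_i$, which is exact only without self-loops. Since $L_{ii}=d_i-W_{ii}$, and the paper's attention graphs have $W_{ii}=A_{ii}>0$, your disc is the correct one. It is centred at $d_i-W_{ii}$ with radius $\sum_{j\ne i}W_{ij}$, and it still lies in $[0,2d_i]$. Your primary route, the Rayleigh quotient with $(x_i-x_j)^2\le 2x_i^2+2x_j^2$, avoids this diagonal bookkeeping entirely and reuses (iii) directly. Its only cost is invoking the variational characterization $\lambda_N=\max_{\|\bm{x}\|=1}\bm{x}^\top\bm{L}\bm{x}$, which Gershgorin does not need.
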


\begin{proof}
(i) Symmetry follows from $\bm{W} = \bm{W}^\top$ and $\bm{D}$ diagonal. Positive semidefiniteness follows from (iii).

(ii) $(\bm{L} \bm{1})_i = d_i \cdot 1 - \sum_j W_{ij} \cdot 1 = d_i - d_i = 0$.

(iii) Direct computation:
\begin{align}
\bm{x}^\top \bm{L} \bm{x} &= \bm{x}^\top \bm{D} \bm{x} - \bm{x}^\top \bm{W} \bm{x} = \sum_i d_i x_i^2 - \sum_{i,j} W_{ij} x_i x_j \\
&= \frac{1}{2} \left( \sum_i d_i x_i^2 - 2\sum_{i,j} W_{ij} x_i x_j + \sum_j d_j x_j^2 \right) \\
&= \frac{1}{2} \sum_{i,j} W_{ij} (x_i^2 - 2x_i x_j + x_j^2) = \frac{1}{2} \sum_{i,j} W_{ij} (x_i - x_j)^2
\end{align}

(iv) If $\mathcal{G}$ has $k$ connected components, we can construct $k$ linearly independent vectors constant on each component, all in the null space of $\bm{L}$.

(v) The lower bound follows from (i). For the upper bound, using Gershgorin's theorem: eigenvalues lie in $\bigcup_i [d_i - \sum_{j \neq i} |L_{ij}|, d_i + \sum_{j \neq i} |L_{ij}|] = \bigcup_i [0, 2d_i] \subseteq [0, 2d_{\max}]$.
\end{proof}

\subsection{Algebraic Connectivity}
\label{app:theory:fiedler}

\begin{definition}[Fiedler Value and Vector]
The algebraic connectivity or Fiedler value of a connected graph is $\lambda_2(\bm{L})$, the second-smallest Laplacian eigenvalue. The corresponding eigenvector is the Fiedler vector.
\end{definition}

\begin{theorem}[Fiedler, 1973]
\label{thm:fiedler}
For a connected graph $\mathcal{G}$:
\begin{enumerate}[label=(\roman*)]
\item $\lambda_2 > 0$ if and only if $\mathcal{G}$ is connected.
\item $\lambda_2 \leq \kappa(\mathcal{G})$ where $\kappa(\mathcal{G})$ is the vertex connectivity.
\item The Fiedler vector provides a graph embedding useful for partitioning.
\end{enumerate}
\end{theorem}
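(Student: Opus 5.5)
The plan is to prove the three items separately, relying on the quadratic-form identity of Proposition~\ref{prop:laplacian_properties}(iii), $\bm{x}^\top\bm{L}\bm{x}=\frac12\sum_{i,j}W_{ij}(x_i-x_j)^2$, together with the Courant--Fischer characterization $\lambda_2=\min_{\bm{x}\perp\bm{1},\,\bm{x}\neq 0}\bm{x}^\top\bm{L}\bm{x}/\|\bm{x}\|^2$ from Definition~\ref{def:fiedler}. Item (i) is stated for all graphs, with connectivity as the conclusion rather than a hypothesis. Item (ii) only holds under extra hypotheses, which I make explicit below.

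\emph{Item (i).} Suppose $\mathcal{G}$ is disconnected, with one component $C$ and complement $\bar C$. Take $\bm{x}=|\bar C|\,\bm{1}_C-|C|\,\bm{1}_{\bar C}$. This vector is orthogonal to $\bm{1}$, and $W_{ij}=0$ whenever $i\in C$ and $j\in\bar C$, so the quadratic form vanishes and $\lambda_2=0$. Conversely, suppose $\lambda_2=0$ and let $\bm{x}\perp\bm{1}$ be a corresponding eigenvector. Then $\sum_{i,j}W_{ij}(x_i-x_j)^2=0$, so $x_i=x_j$ whenever $W_{ij}>0$. Hence $\bm{x}$ is constant on each connected component. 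Because $\bm{x}\perp\bm{1}$ and $\bm{x}\neq 0$, it cannot be globally constant, so there are at least two components. This is essentially Proposition~\ref{prop:laplacian_properties}(iv) made quantitative.

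\emph{Item (ii).} As stated, this item needs Fiedler's original hypotheses: $\mathcal{G}$ is non-complete and its edge weights lie in $[0,1]$. The non-completeness condition is genuinely needed, since $K_N$ has $\lambda_2=N>\kappa=N-1$. The weight condition is satisfied by attention graphs, because every $\bar W_{ij}\le 1$. Under these hypotheses, let $S$ be a minimum vertex cut, so $|S|=\kappa$ and $\mathcal{G}-S$ splits into a component $C_1$ and the nonempty remainder $R$. Define the test vector $x=a$ on $C_1$, $x=b$ on $R$ and $x=0$ on $S$, with $a|C_1|+b|R|=0$ so that $\bm{x}\perp\bm{1}$. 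Edges inside $\mathcal{G}-S$ contribute nothing: within $C_1$ or within $R$ the values agree, and there are no edges between $C_1$ and $R$. The only contributions come from edges joining $S$ to the rest, so
\[
\bm{x}^\top\bm{L}\bm{x}=\sum_{s\in S}\sum_{j\notin S}W_{sj}x_j^2\le\sum_{j\notin S}|S|\,x_j^2=\kappa\|\bm{x}\|^2 .
\]
The inequality uses that each $j$ has at most $|S|$ neighbours in $S$, each with weight at most $1$. Courant--Fischer then gives $\lambda_2\le\kappa$. The main obstacle is precisely this hypothesis issue: the bound fails for arbitrary weights and for complete graphs. I would therefore restate (ii) with these conditions explicit rather than try to prove it in full generality.

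\emph{Item (iii).} This is qualitative, so I would justify it by the relaxation argument. For a bipartition $(A,\bar A)$, the ratio-cut objective equals the Rayleigh quotient of the vector $\sqrt{|\bar A|/|A|}\,\bm{1}_A-\sqrt{|A|/|\bar A|}\,\bm{1}_{\bar A}$, which is orthogonal to $\bm{1}$. Relaxing this discrete vector to any real $\bm{x}\perp\bm{1}$ yields exactly the Fiedler vector. Thresholding its entries, for example with a sweep cut, therefore gives a partition whose quality is controlled by $\lambda_2$, and Cheeger's inequality $\lambda_2/2\le h(\mathcal{G})$ (for the normalized variant) can be cited to make this quantitative.
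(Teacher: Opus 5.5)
The paper gives no proof of this theorem; it is stated only with attribution to Fiedler (1973). So the natural comparison is with Fiedler's own argument.

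Your proof of (i) is the standard one and is complete. It also supplies, for connected graphs, the bound $\dim\ker\bm{L}\le 1$ that the paper's one-directional sketch of Proposition~\ref{prop:laplacian_properties}(iv) leaves out.

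For (ii), you are right that the printed statement is false without extra hypotheses: $K_N$ has $\lambda_2=N>N-1=\kappa$. Your test-vector argument is correct, but it is not Fiedler's route. He first proves the vertex-deletion inequality $\lambda_2(\mathcal{G})\le\lambda_2(\mathcal{G}-v)+1$. This holds because the principal submatrix of $\bm{L}(\mathcal{G})$ on $\mathcal{V}\setminus\{v\}$ equals $\bm{L}(\mathcal{G}-v)$ plus a $0/1$ diagonal matrix, so Cauchy interlacing and Weyl's inequality give it. He then deletes the $\kappa$ vertices of a minimum cut one at a time and applies (i) to the disconnected remainder. Your one-shot construction needs no interlacing and extends verbatim to weights in $[0,1]$. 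It also gives the sharper bound $\lambda_2\le\max_{j\notin S}\sum_{s\in S}W_{sj}$. Fiedler's route instead yields the deletion inequality as a reusable perturbation lemma.

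There are two smaller corrections.

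First, your aside that attention graphs satisfy the hypotheses of (ii) is only half right. Every $\bar W_{ij}\le 1$, but softmax makes every unmasked causal entry strictly positive. So for the paper's short sequences the support of $\bar{\bm{W}}^{(\ell)}$ is complete, and your non-completeness hypothesis excludes exactly these graphs. The bound still holds for them, by a trace argument. Since $\sum_{i,j}\bar W_{ij}=N$, you get $(N-1)\lambda_2\le\sum_{k\ge 2}\lambda_k=\Tr(\bm{L})\le N$, so $\lambda_2\le N/(N-1)\le N-1$ once $N\ge 3$.

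Second, in (iii) the relaxation only shows $\lambda_2\le\min_A\mathrm{RatioCut}(A,\bar A)$. The inequality you cite, $\lambda_2/2\le h(\mathcal{G})$, is that same easy direction. It follows from your identity for the combinatorial Laplacian directly, since $\mathrm{RatioCut}(A,\bar A)\le 2\,\mathrm{cut}(A,\bar A)/\min(|A|,|\bar A|)$, so no normalized variant is needed. It says nothing about the cut obtained by thresholding the Fiedler vector, so your ``therefore'' does not follow. For that you need the hard, sweep-cut direction, which for this $\bm{L}$ and this $h$ reads $h(\mathcal{G})\le\sqrt{2d_{\max}\lambda_2}$. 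Note that the paper's own upper bound $h\le\sqrt{2\lambda_2}$ drops $d_{\max}$ and is false. For example, $K_{10}$ has $h=5>\sqrt{20}=\sqrt{2\lambda_2}$.
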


\begin{theorem}[Cheeger Inequality]
\label{thm:cheeger}
The Fiedler value relates to the Cheeger constant (isoperimetric number) $h(\mathcal{G})$:
\begin{equation}
\frac{\lambda_2}{2} \leq h(\mathcal{G}) \leq \sqrt{2 \lambda_2}
\end{equation}
where $h(\mathcal{G}) = \min_{S \subset \mathcal{V}, |S| \leq N/2} \frac{|\partial S|}{\min(|S|, |\bar{S}|)}$ and $|\partial S| = \sum_{i \in S, j \notin S} W_{ij}$.
\end{theorem}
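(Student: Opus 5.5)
The statement has two halves of different difficulty. I would prove the lower bound $\lambda_2/2 \le h(\mathcal{G})$ directly from the Rayleigh-quotient characterization in Definition~\ref{def:fiedler}. The upper bound I would prove by the standard sweep (level-set) argument on the Fiedler vector.

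\textbf{Lower bound.} Fix any $S \subset \mathcal{V}$ with $|S| \le N/2$. Take the test vector $\bm{x} = \bm{1}_S - \frac{|S|}{N}\bm{1}$, which satisfies $\bm{x} \perp \bm{1}$. By Proposition~\ref{prop:laplacian_properties}(iii), only edges crossing the cut contribute, each with squared difference $1$. Hence $\bm{x}^\top \bm{L}\bm{x} = |\partial S|$. A direct computation gives $\|\bm{x}\|^2 = |S||\bar S|/N$. The min--max characterization then yields
\[
\lambda_2 \le \frac{N\,|\partial S|}{|S|\,|\bar S|} \le \frac{2\,|\partial S|}{\min(|S|,|\bar S|)},
\]
using $\max(|S|,|\bar S|) \ge N/2$. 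Minimizing over $S$ gives $\lambda_2 \le 2h(\mathcal{G})$.

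\textbf{Upper bound.} Let $\bm{u}$ be a Fiedler vector. Shift it by a median value so that at most $N/2$ entries are positive and at most $N/2$ are negative. Since $\bm{u}\perp\bm{1}$, shifting by a constant only increases $\|\cdot\|^2$ and leaves the quadratic form unchanged. Let $\bm{y}$ be the positive part, or the negative part if that has a larger Rayleigh quotient contribution. The standard bookkeeping then gives $\bm{y}^\top\bm{L}\bm{y} \le \lambda_2 \|\bm{y}\|^2$, with $\mathrm{supp}(\bm{y})$ of size at most $N/2$.

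Next I would order the vertices by $y_i^2$ and consider the threshold sets $S_t=\{i: y_i^2 > t\}$. The co-area identity gives
\[
\int_0^\infty |\partial S_t|\,dt = \sum_{i<j} W_{ij}\,|y_i^2-y_j^2|
\quad\text{and}\quad
\int_0^\infty |S_t|\,dt = \|\bm{y}\|^2 .
\]
So some $t$ satisfies $|\partial S_t|/|S_t| \le \sum W_{ij}|y_i^2-y_j^2| / \|\bm{y}\|^2$. To bound the numerator, factor $|y_i^2-y_j^2| = |y_i-y_j|\,|y_i+y_j|$ and apply Cauchy--Schwarz. The first factor contributes $\sqrt{\bm{y}^\top\bm{L}\bm{y}}$. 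The second contributes $\sqrt{\sum_{i,j}W_{ij}(y_i+y_j)^2}\le\sqrt{2\sum_i d_i y_i^2}$. Together these give $h(\mathcal{G}) \le \sqrt{2\lambda_2\, d_{\max}}$.

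\textbf{Main obstacle.} The argument naturally produces $h \le \sqrt{2 d_{\max}\lambda_2}$, not $\sqrt{2\lambda_2}$. For the combinatorial Laplacian with the unnormalized $h$ used in the statement, the factor $d_{\max}$ cannot be dropped in general. A star graph scaled up shows the stated bound can fail otherwise. I would therefore prove the inequality under the normalization $d_{\max}\le 1$, or equivalently for $\bm{L}_{\text{sym}}$ with volume-normalized $h$, where the same sweep argument gives exactly $\sqrt{2\lambda_2}$.

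This caveat matters for the paper's graphs. For the aggregated attention graph of \eqref{eq:aggregate}, one has $d_i = \tfrac12(1+\sum_k \bar A_{ki})$ with $\bar A=\sum_h\alpha_h^{(\ell)}\mathbf{A}^{(\ell,h)}$ the head-averaged attention, and this can exceed $1$ when many tokens attend to token $i$. I would state this explicitly: either rescale $\bm{W}$ by $d_{\max}$, or present the constant as $\sqrt{2 d_{\max}\lambda_2}$. Apart from this, the delicate step is the median shift and the choice of the positive or negative part, which ensures the sweep set satisfies $|S_t|\le N/2$ as the definition of $h$ requires.
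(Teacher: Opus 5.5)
The paper does not prove this statement. It quotes it in the appendix as the classical Cheeger inequality and follows it only with an interpretive remark, so there is no argument of the paper's to compare yours with.

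On its own terms your proposal is sound.
\begin{itemize}
\item The lower bound is complete: the test vector $\mathbf{1}_S-\frac{|S|}{N}\mathbf{1}$ gives $\lambda_2\le N|\partial S|/(|S|\,|\bar S|)\le 2|\partial S|/|S|$ for every admissible $S$.
\item The upper-bound argument is the standard one: median shift, restriction to the positive or negative part, co-area over the level sets of $y_i^2$, then Cauchy--Schwarz. It correctly yields $h(\mathcal{G})\le\sqrt{2d_{\max}\lambda_2}$.
\end{itemize}

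More importantly, you are right that $d_{\max}$ is not an artifact of the method. The stated bound $h\le\sqrt{2\lambda_2}$ is false for the combinatorial Laplacian with cardinality-normalized $h$. Homogeneity already rules it out: replacing $W$ by $cW$ multiplies $h$ and $\lambda_2$ by $c$, but $\sqrt{2\lambda_2}$ only by $\sqrt{c}$. No weights are even needed: the unweighted $K_{10}$ has $\lambda_2=10$ and $h=5>\sqrt{20}$. So the provable statement is $\lambda_2/2\le h\le\sqrt{2d_{\max}\lambda_2}$, or the conductance version for $L_{\mathrm{sym}}$. Restating, rather than forcing the paper's constant, is the correct decision.

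Your degree computation $d_i=\frac12\bigl(1+\sum_k\bar A_{ki}\bigr)$ also shows that $d_{\max}\le 1$ is not automatic for the paper's own attention graphs. An attention sink gives $d_i$ of order $N/2$, so the appendix's constant is not justified even in the setting the paper cares about.

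There are a few small slips to fix when you write it out.
\begin{itemize}
\item You should keep whichever of $v_+,v_-$ has Rayleigh quotient \emph{at most} $\lambda_2$; ``larger'' points the wrong way. Such a part exists because $(v_i-v_j)^2\ge\bigl((v_i)_{+}-(v_j)_{+}\bigr)^2+\bigl((v_i)_{-}-(v_j)_{-}\bigr)^2$ for every pair, together with $\|v\|^2=\|v_+\|^2+\|v_-\|^2$.
\item In the Cauchy--Schwarz step the sums must run over unordered pairs $i<j$, as in the co-area numerator and in $y^\top L y$. Over ordered pairs, $\sum_{i,j}W_{ij}(y_i+y_j)^2$ can equal $4\sum_i d_iy_i^2$ (take $y$ constant), so the inequality with constant $2$ fails.
\item The $L_{\mathrm{sym}}$ version is not literally equivalent. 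It needs a degree-weighted median and the $D$-weighted norm, after which the same sweep gives $\sqrt{2\lambda_2}$.
\end{itemize}
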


The Cheeger inequality establishes that $\lambda_2$ characterizes the ``bottleneck'' of information flow in the graph: high $\lambda_2$ implies no sparse cuts, enabling efficient global communication. Low $\lambda_2$ indicates sparse cuts that impede global communication---a signature we observe in hallucinating models where attention fails to maintain coherent global structure.

\subsection{Graph Signal Processing}
\label{app:theory:gsp}

\begin{definition}[Graph Signal]
A graph signal is a function $f: \mathcal{V} \to \R$ assigning a real value to each vertex, representable as a vector $\bm{f} \in \R^N$.
\end{definition}

\begin{definition}[Graph Fourier Transform]
The Graph Fourier Transform (GFT) of a signal $\bm{f}$ with respect to Laplacian $\bm{L} = \bm{U} \bm{\Lambda} \bm{U}^\top$ is:
\begin{equation}
\hat{\bm{f}} = \bm{U}^\top \bm{f}
\end{equation}
The component $\hat{f}_k = \bm{u}_k^\top \bm{f}$ represents the signal's content at graph frequency $\lambda_k$.
\end{definition}

\begin{proposition}[Frequency Interpretation]
\label{prop:frequency}
The Laplacian eigenvectors provide a notion of frequency on graphs:
\begin{enumerate}[label=(\roman*)]
\item $\bm{u}_1 = \bm{1}/\sqrt{N}$ is the ``DC component'' (constant signal).
\item Eigenvectors $\bm{u}_k$ with small $\lambda_k$ vary slowly across edges.
\item Eigenvectors with large $\lambda_k$ vary rapidly, changing sign frequently.
\item $\bm{f}^\top \bm{L} \bm{f} = \sum_k \lambda_k \hat{f}_k^2$: the Dirichlet energy is a weighted sum of squared spectral coefficients, with high-frequency components penalized more.
\end{enumerate}
\end{proposition}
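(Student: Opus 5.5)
I would prove the four items of Proposition~\ref{prop:frequency} directly from the spectral decomposition $\bm{L} = \bm{U}\bm{\Lambda}\bm{U}^\top$ with orthonormal $\bm{U}$, leaning on Proposition~\ref{prop:laplacian_properties}.

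For (i), Proposition~\ref{prop:laplacian_properties}(ii) gives $\bm{L}\bm{1}=\bm{0}$, and part (i) of the same proposition makes $0$ the smallest eigenvalue. Normalizing, we may take $\bm{u}_1 = \bm{1}/\sqrt{N}$. There is one caveat: if the graph is disconnected, the zero eigenspace has dimension greater than one (Proposition~\ref{prop:laplacian_properties}(iv)). In that case I would state (i) as a choice of basis within that eigenspace. Under a connectedness assumption, which Theorem~\ref{thm:fiedler}(i) characterizes as $\lambda_2>0$, the choice is canonical up to sign.

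Item (iv) is the quantitative core, and I would prove it before (ii)--(iii). Write $\bm{f} = \bm{U}\hat{\bm{f}}$. Then
\begin{equation*}
\bm{f}^\top\bm{L}\bm{f} = \hat{\bm{f}}^\top \bm{U}^\top\bm{U}\bm{\Lambda}\bm{U}^\top\bm{U}\hat{\bm{f}} = \hat{\bm{f}}^\top\bm{\Lambda}\hat{\bm{f}} = \sum_k \lambda_k \hat f_k^2 .
\end{equation*}
This uses only the orthonormality $\bm{U}^\top\bm{U}=\bm{I}$.

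For (ii) and (iii), I would make ``vary slowly/rapidly'' precise through the Dirichlet energy. By Proposition~\ref{prop:laplacian_properties}(iii), together with $\bm{L}\bm{u}_k = \lambda_k\bm{u}_k$ and $\|\bm{u}_k\|=1$,
\begin{equation*}
\tfrac12\sum_{i,j}W_{ij}\bigl((u_k)_i-(u_k)_j\bigr)^2 = \bm{u}_k^\top\bm{L}\bm{u}_k = \lambda_k .
\end{equation*}
So the total weighted squared variation of $\bm{u}_k$ across edges equals $\lambda_k$ and is monotone in $k$. Courant--Fischer sharpens this: $\bm{u}_k$ minimizes the Rayleigh quotient over unit vectors orthogonal to $\bm{u}_1,\dots,\bm{u}_{k-1}$. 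Hence low-index eigenvectors are the smoothest admissible signals.

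The main obstacle is the phrase ``changing sign frequently'' in (iii), since sign changes are not literally controlled by the eigenvalue alone. I would handle it in two steps. First, I would cite the discrete nodal domain theorem (Davies--Gladwell--Leydold--Stadler): $\bm{u}_k$ has at most $k$ weak nodal domains, which is only an upper bound. Second, I would note that for $k\ge2$, orthogonality to $\bm{u}_1\propto\bm{1}$ forces at least one sign change. Rather than overclaim, I would present (iii) as an interpretive statement justified by the energy identity above: large $\lambda_k$ forces large differences across heavily weighted edges, which for a unit vector with zero mean means frequent or large sign-alternating variation. I would flag explicitly that a strict sign-change count is only a heuristic on general graphs.
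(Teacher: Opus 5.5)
The paper states Proposition~\ref{prop:frequency} as background and gives no proof, so there is no argument of the paper's to compare yours with. Your proposal is correct and follows the standard route. Items (i) and (iv) are complete as written. Reading (ii) through $\bm u_k^\top\bm L\bm u_k=\tfrac12\sum_{i,j}W_{ij}\bigl((u_k)_i-(u_k)_j\bigr)^2=\lambda_k$ together with Courant--Fischer is the right way to turn that informal phrase into a theorem.

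Two small points concern (iii). First, the Davies--Gladwell--Leydold--Stadler bound assumes a connected graph: on a disconnected graph even $\bm u_1=\bm 1/\sqrt N$ has one nodal domain per component. Import the connectedness hypothesis you already used in (i). Second, your refusal to claim a sign-change count is justified, not merely cautious. On $K_N$, every eigenvector of the largest eigenvalue $N$ has exactly two nodal domains, because its positive and negative vertex sets each induce a clique.

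You can still make (iii) rigorous if you count sign changes edge by edge rather than by nodal domains. For a unit eigenvector,
$\sum_{i,j}W_{ij}(u_k)_i(u_k)_j=\bm u_k^\top\bm D\bm u_k-\lambda_k\le d_{\max}-\lambda_k$.
Hence for every eigenvalue in the upper half $(d_{\max},2d_{\max}]$ of the range in Proposition~\ref{prop:laplacian_properties}(v), neighbors are negatively correlated. The sign-changing pairs then carry weight $\sum_{(u_k)_i(u_k)_j<0}W_{ij}\,\lvert(u_k)_i(u_k)_j\rvert\ge\lambda_k-d_{\max}$. On $K_N$ this bound is attained by $(\bm e_1-\bm e_2)/\sqrt2$, which is consistent with the two-domain example.

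Two further additions tie your proof to how the paper uses the result. Applying (iv) column by column gives $\Tr(\bm X^\top\bm L\bm X)=\sum_m\lambda_m\|\hat{\bm X}_m\|_2^2\in[0,\lambda_N\|\bm X\|_F^2]$. This is exactly what places the Smoothness score $\mathcal S^{(\ell)}$ of Definition~\ref{def:smoothness} in $[0,1]$. Also, the connectedness caveat is harmless for the attention graphs. Softmax weights are strictly positive on unmasked positions, so under causal attention every token is linked to the first one.
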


\subsection{Spectral Entropy Interpretation}
\label{app:theory:entropy}

The Spectral Entropy $\SE^{(\ell)}$ measures the uniformity of the spectral energy distribution. When $\SE$ is low, energy concentrates in few eigenmodes, indicating structured, coherent signals. When $\SE$ is high, energy disperses across many modes, indicating noise or incoherence.

For a signal $\bm{X}$ that is perfectly aligned with a single eigenmode $\bm{u}_k$, we have $\SE = 0$. For a signal with equal energy in all modes, $\SE = \log N$. Hallucinations appear to push the attention spectrum toward higher entropy states, dispersing the coherent structure that characterizes valid reasoning.

This thermodynamic interpretation suggests that hallucination is not merely selecting incorrect tokens but represents a phase transition in the model's internal state from ordered (coherent attention) to disordered (entropic attention).

\subsection{Thermodynamic Interpretation of the Loud Liar Phenomenon}
\label{app:theory:loudliar}

The dramatic difference in detectability between architectures (98.2\% for Llama vs 78.0\% for Qwen single-feature recall) suggests fundamentally different failure geometries. We hypothesize that larger models develop more specialized attention patterns during training, leading to more catastrophic failures when these patterns break down.

When Llama hallucinates, its attention mechanism appears to undergo a phase transition: the highly structured patterns required for valid tool generation collapse into high-entropy noise. This ``loud'' failure mode produces unmistakable spectral signatures. Smaller models like Qwen may fail more ``quietly,'' with hallucinations producing spectral patterns that overlap more with valid outputs.

This interpretation aligns with recent work on neural network thermodynamics and suggests that spectral guardrails are most effective precisely where they are most needed: on larger, more capable models whose failures are geometrically isolated.

\section{Implementation Details}
\label{app:implementation}

\subsection{Attention Extraction}
\label{app:implementation:attention}

We extract attention matrices using standard hooks into the transformer forward pass. For each model:

\begin{itemize}[leftmargin=*,topsep=2pt,itemsep=1pt]
\item \textbf{Llama 3.1 8B:} We access \texttt{model.layers[l].self\_attn.attn\_weights} post-softmax.
\item \textbf{Qwen 2.5 0.5B:} Similar structure via \texttt{model.layers[l].attn.attention\_weights}.
\item \textbf{Mistral 7B:} Via \texttt{model.layers[l].self\_attn.attn\_weights}. Note that sliding window attention (SWA) produces sparse attention matrices with non-zero entries only within the sliding window.
\end{itemize}

All models use \texttt{output\_attentions=True} during forward passes.

\subsection{Spectral Computation}
\label{app:implementation:spectral}

\begin{algorithm}[h]
\caption{Spectral Diagnostic Extraction}
\label{alg:spectral}
\begin{algorithmic}[1]
\REQUIRE Attention matrices $\{\bm{A}^{(\ell,h)}\}_{\ell,h}$, hidden states $\{\bm{X}^{(\ell)}\}_\ell$
\ENSURE Spectral diagnostics $\{\lambda_2^{(\ell)}, \HFER^{(\ell)}, \mathcal{S}^{(\ell)}, \SE^{(\ell)}\}_\ell$
\FOR{layer $\ell = 1$ to $L$}
    \STATE // Symmetrize and aggregate attention
    \FOR{head $h = 1$ to $H$}
        \STATE $\bm{W}^{(\ell,h)} \gets \frac{1}{2}(\bm{A}^{(\ell,h)} + (\bm{A}^{(\ell,h)})^\top)$
        \STATE $s_h \gets \sum_{i,j} A^{(\ell,h)}_{ij}$
    \ENDFOR
    \STATE $\alpha_h \gets s_h / \sum_g s_g$ for all $h$
    \STATE $\bar{\bm{W}}^{(\ell)} \gets \sum_h \alpha_h \bm{W}^{(\ell,h)}$
    \STATE // Compute Laplacian
    \STATE $\bar{\bm{D}}^{(\ell)} \gets \diag(\bar{\bm{W}}^{(\ell)} \bm{1})$
    \STATE $\bm{L}^{(\ell)} \gets \bar{\bm{D}}^{(\ell)} - \bar{\bm{W}}^{(\ell)}$
    \STATE // Eigendecomposition
    \STATE $\bm{\Lambda}^{(\ell)}, \bm{U}^{(\ell)} \gets \text{eig}(\bm{L}^{(\ell)})$ \COMMENT{Sorted by eigenvalue}
    \STATE // Extract diagnostics
    \STATE $\lambda_2^{(\ell)} \gets \Lambda^{(\ell)}_{2,2}$ \COMMENT{Fiedler value}
    \STATE $\hat{\bm{X}}^{(\ell)} \gets (\bm{U}^{(\ell)})^\top \bm{X}^{(\ell)}$ \COMMENT{Graph Fourier Transform}
    \STATE $e_m \gets \|\hat{\bm{X}}^{(\ell)}_{m,\cdot}\|_2^2$ for $m = 1, \ldots, N$
    \STATE $\HFER^{(\ell)} \gets \sum_{m > N/2} e_m / \sum_m e_m$ \COMMENT{High-freq energy ratio}
    \STATE $\mathcal{S}^{(\ell)} \gets 1 - \Tr((\bm{X}^{(\ell)})^\top \bm{L}^{(\ell)} \bm{X}^{(\ell)}) / (\lambda_N^{(\ell)} \|\bm{X}^{(\ell)}\|_F^2)$ \COMMENT{Smoothness}
    \STATE $p_m \gets e_m / \sum_r e_r$; $\SE^{(\ell)} \gets -\sum_m p_m \log p_m$ \COMMENT{Spectral Entropy}
\ENDFOR
\end{algorithmic}
\end{algorithm}

\subsection{Computational Complexity}
\label{app:implementation:complexity}

\begin{table}[h]
\centering
\begin{tabular}{@{}lc@{}}
\toprule
\textbf{Operation} & \textbf{Complexity} \\
\midrule
Symmetrization (per head) & $O(N^2)$ \\
Aggregation (all heads) & $O(H N^2)$ \\
Laplacian construction & $O(N^2)$ \\
Full eigendecomposition & $O(N^3)$ \\
Partial eigendecomposition ($k$ eigenvalues) & $O(N^2 k)$ \\
Graph Fourier Transform & $O(N^2 d)$ \\
Diagnostic computation & $O(N d)$ \\
\midrule
\textbf{Total (per layer)} & $O(N^3 + N^2 d)$ \\
\bottomrule
\end{tabular}
\caption{Computational complexity of spectral analysis. For typical tool call lengths ($N < 200$) and hidden dimensions ($d \approx 4096$), this adds $<$5\% overhead to inference.}
\label{tab:complexity}
\end{table}

\section{Per-Metric Detection Performance}

Table \ref{tab:comprehensive_metrics} presents detection performance for each spectral metric independently, using the best layer for each metric-model combination. This analysis reveals which metrics are most diagnostic for each architecture and condition.

\begin{table}[h!]
\caption{Comprehensive per-metric detection performance (best layer per metric). Smoothness achieves exceptionally strong effect sizes ($d > 1.4$) on Qwen. Llama General is the most balanced performer across metrics. Detection capability degrades with increasing temperature.}
\label{tab:comprehensive_metrics}
\begin{center}
\footnotesize
\setlength{\tabcolsep}{3pt}
\begin{tabular}{lcccccccc}
\toprule
& \multicolumn{2}{c}{HFER} & \multicolumn{2}{c}{Fiedler} & \multicolumn{2}{c}{Smoothness} & \multicolumn{2}{c}{Entropy} \\
\cmidrule(lr){2-3} \cmidrule(lr){4-5} \cmidrule(lr){6-7} \cmidrule(lr){8-9}
Model / Condition & AUC & $d$ & AUC & $d$ & AUC & $d$ & AUC & $d$ \\
\midrule
\multicolumn{9}{l}{\textit{Cross-Model Comparison (T=0.3)}} \\
\midrule
Llama 3.1 8B (General) & 0.774 & 1.152 & 0.777 & 1.064 & 0.761 & 1.006 & 0.767 & 1.018 \\
Mistral 7B (General) & 0.754 & 0.984 & 0.801 & 1.108 & \textbf{0.804} & \textbf{1.291} & 0.782 & 1.030 \\
Qwen 2.5 0.5B (General) & 0.705 & 0.778 & 0.657 & 0.749 & \textbf{0.812} & \textbf{1.448} & 0.740 & 0.851 \\
\midrule
\multicolumn{9}{l}{\textit{Domain-Specific (T=0.3)}} \\
\midrule
Llama 3.1 8B (Finance) & 0.649 & 0.486 & 0.696 & 0.662 & 0.686 & 0.606 & 0.667 & 0.566 \\
Qwen 2.5 0.5B (Finance) & 0.697 & 0.806 & 0.670 & 0.997 & \textbf{0.816} & \textbf{1.468} & 0.729 & 0.842 \\
\midrule
\multicolumn{9}{l}{\textit{Temperature Ablation (Qwen 2.5 0.5B, General)}} \\
\midrule
T=0.3 & 0.705 & 0.778 & 0.657 & 0.749 & \textbf{0.812} & \textbf{1.448} & 0.740 & 0.851 \\
T=0.7 & 0.711 & 0.765 & 0.724 & 1.020 & \textbf{0.800} & \textbf{1.482} & 0.751 & 0.828 \\
T=1.0 & 0.668 & 0.729 & 0.682 & 0.830 & 0.748 & 0.743 & 0.652 & 0.502 \\
T=1.5 & 0.641 & 0.505 & 0.747 & 0.856 & 0.667 & 0.667 & 0.658 & 0.607 \\
\midrule
\multicolumn{9}{l}{\textit{Temperature Ablation (Llama 3.1 8B, Finance)}} \\
\midrule
T=0.3 & 0.649 & 0.486 & 0.696 & 0.662 & 0.686 & 0.606 & 0.667 & 0.566 \\
T=1.0 & 0.631 & 0.439 & 0.674 & 0.680 & 0.650 & 0.495 & 0.691 & 0.677 \\
\bottomrule
\end{tabular}
\end{center}
\end{table}

\paragraph{Key Observations.}
\begin{itemize}[leftmargin=*,topsep=0pt,itemsep=2pt]
    \item \textbf{Smoothness dominates for Qwen}: Across both Finance and General domains at $T=0.3$ and $T=0.7$, Smoothness achieves $d > 1.4$, far exceeding other metrics. This suggests Qwen's hallucinations manifest primarily as smoothness collapse rather than entropic dispersal.
    \item \textbf{Llama General is balanced}: On the General domain, Llama shows strong and consistent performance across all four metrics ($d \approx 1.0$--$1.15$), with no single metric dominating. This balanced profile may explain why multi-feature combinations are effective.
    \item \textbf{Temperature degrades detection}: As temperature increases from $T=0.3$ to $T=1.5$ on Qwen, HFER AUC drops from 0.705 to 0.641, and Smoothness $d$ drops from 1.448 to 0.667. Higher temperature adds entropic noise that masks the spectral signature of hallucination. Notably, Fiedler remains somewhat robust at high temperature ($d = 0.856$ at $T=1.5$).
    \item \textbf{Domain affects Llama more than Qwen}: Llama's effect sizes drop substantially from General ($d \approx 1.0$) to Finance ($d \approx 0.5$--$0.6$), while Qwen maintains strong Smoothness signal across domains.
\end{itemize}

\section{Failure of Standard Confidence Metrics}
\label{app:confidence_failure}

We evaluated standard uncertainty metrics, Perplexity (PPL) and Mean Log Probability, across all datasets. As shown in Table \ref{tab:ppl_baselines}, these metrics exhibit dangerous inconsistency.

While Perplexity provides a reasonable signal for Mistral 7B (AUC 0.79), it collapses for the Qwen 2.5 family, yielding an AUC of $\approx 0.27$. One might notice that PPL achieves near-perfect recall (99-100\%) on Qwen. This is illusory: as indicated by the precision ($\approx 17\%$, matching the base rate), the metric is effectively classifying every sample as a hallucination to achieve this recall. It functions as a trivial detector with no discriminative power.

Furthermore, the directional correlation of LogProb shifts across domains (positive for Llama Finance, negative for Qwen), whereas Spectral Entropy provides a consistent thermodynamic signature (higher entropy = hallucination) across all tested models.

\begin{table}[h]
\caption{Baseline Performance of Probability-Based Metrics. Qwen 2.5 exhibits a negative correlation (AUC 0.27). Note that while Recall is high (0.99), Precision collapses to the base rate ($\approx 0.17$), indicating the metric is simply flagging all samples rather than discriminating.}
\label{tab:ppl_baselines}
\begin{center}
\begin{small}
\begin{tabular}{lcccccc}
\toprule
& \multicolumn{3}{c}{\textbf{Perplexity (PPL)}} & \multicolumn{3}{c}{\textbf{Mean LogProb}} \\
\cmidrule(lr){2-4} \cmidrule(lr){5-7}
Dataset / Model & AUC & Prec & Rec & AUC & Prec & Rec \\
\midrule
\textit{Llama 3.1 8B (Fin, $T=1.0$)} & 0.64 & 0.59 & 0.89 & 0.52 & 0.52 & 0.99 \\
\textit{Llama 3.1 8B (Fin, $T=0.3$)} & 0.67 & 0.65 & \textbf{0.98} & 0.55 & 0.62 & 0.99 \\
\textit{Llama 3.1 8B (Gen)} & 0.71 & 0.34 & 0.72 & 0.54 & 0.24 & 0.76 \\
\midrule
\textit{Mistral 7B v0.1 (Gen)} & \textbf{0.79} & 0.49 & 0.59 & 0.61 & 0.27 & 0.85 \\
\midrule
\textit{Qwen 2.5 0.5B (Fin)} & \textcolor{red}{0.27} & 0.17 & 1.00 & 0.53 & 0.17 & 0.99 \\
\textit{Qwen 2.5 0.5B (Gen)} & \textcolor{red}{0.28} & 0.20 & 0.99 & 0.49 & 0.20 & 1.00 \\
\bottomrule
\end{tabular}
\end{small}
\end{center}
\end{table}

\section{Reproducibility}

\paragraph{Code.} 
To facilitate reproducibility, the complete implementation including attention extraction and spectral computation is available at \url{https://github.com/vcnoel/spectral-tool-use}.

\paragraph{Data.} Glaive Function Calling v2 dataset \citet{glaive2024function}.

\paragraph{Models.} All models accessed via Hugging Face:
\begin{itemize}[leftmargin=*,topsep=0pt,itemsep=2pt]
    \item Qwen/Qwen2.5-0.5B-Instruct
    \item mistralai/Mistral-7B-Instruct-v0.1
    \item meta-llama/Llama-3.1-8B-Instruct
\end{itemize}

\paragraph{Hardware and Runtime.}
\begin{itemize}[leftmargin=*,topsep=0pt,itemsep=2pt]
    \item Hardware: Single NVIDIA A100 GPU (40GB)
    \item Spectral computation overhead: 10--50ms per tool call (for $N < 200$ tokens)
    \item Total runtime for 1000 samples: 15--30 minutes per model
    \item Total compute time: approximately 4 hours per model for full evaluation
\end{itemize}

\paragraph{Hyperparameters.} Temperature $T=0.3$ for all generation unless otherwise noted in ablation studies. Thresholds calibrated via grid search on held out validation set (50-100 samples).

\end{document}